\newcommand{\commentout}[1]{}
\newcommand{\junk}[1]{}
\newcommand{\etal}{\emph{et al.}}
\newtheorem{theorem}{Theorem}
\newtheorem{lemma}{Lemma}
\newcommand{\cE}{\mathcal{E}}
\newcommand{\cG}{\mathcal{G}}
\newcommand{\cH}{\mathcal{H}}
\newcommand{\cN}{\mathcal{N}}
\newcommand{\realset}{\mathbb{R}}
\newcommand{\abs}[1]{\left|#1\right|}
\newcommand{\E}[2]{\mathbb{E}_{#2} \! \left[#1\right]}
\newcommand{\EE}[1]{\mathbb{E} \left[#1\right]}
\newcommand{\I}[1]{\mathds{1} \! \left\{#1\right\}}
\newcommand{\rnd}[1]{\mathbf{#1}}
\newcommand{\set}[1]{\left\{#1\right\}}
\newcommand{\transpose}{^\mathsf{\scriptscriptstyle T}}
\DeclareMathOperator*{\argmax}{arg\,max\,}
\mathchardef\mhyphen="2D
\newcommand{\cascadeklucb}{{\tt CascadeKL\mhyphen UCB}}
\newcommand{\cascadelints}{{\tt CascadeLinTS}}
\newcommand{\cascadelinucb}{{\tt CascadeLinUCB}}
\newcommand{\cascadeucb}{{\tt CascadeUCB1}}
\newcommand{\lints}{{\tt LinTS}}
\newcommand{\rankedlints}{{\tt RankedLinTS}}
\newcommand{\tr}{\mathrm{trace}}
\begin{document}

\title{Cascading Bandits for Large-Scale Recommendation Problems}

\author{Shi Zong \\
Dept of Electrical and Computer Engineering \\
Carnegie Mellon University \\
\emph{szong@andrew.cmu.edu} \And
Hao Ni \\
Dept of Electrical and Computer Engineering \\
Carnegie Mellon University \\
\emph{haon@cmu.edu} \AND
Kenny Sung \\
Dept of Electrical and Computer Engineering \\
Carnegie Mellon University \\
\emph{tsung@andrew.cmu.edu} \And
Nan Rosemary Ke \\
D\'{e}pt d'informatique et de recherche op\'{e}rationnelle \\
Universit\'{e} de Montr\'{e}al \\
\emph{nke001@gmail.com} \AND
Zheng Wen \\
Adobe Research \\
San Jose, CA \\
\emph{zwen@adobe.com} \And
Branislav Kveton \\
Adobe Research \\
San Jose, CA \\
\emph{kveton@adobe.com}}

\maketitle

\begin{abstract}
Most recommender systems recommend a list of items. The user examines the list, from the first item to the last, and often chooses the first attractive item and does not examine the rest. This type of user behavior can be modeled by the \emph{cascade model}. In this work, we study \emph{cascading bandits}, an online learning variant of the cascade model where the goal is to recommend $K$ most attractive items from a large set of $L$ candidate items. We propose two algorithms for solving this problem, which are based on the idea of linear generalization. The key idea in our solutions is that we learn a predictor of the attraction probabilities of items from their features, as opposing to learning the attraction probability of each item independently as in the existing work. This results in practical learning algorithms whose regret does not depend on the number of items $L$. We bound the regret of one algorithm and comprehensively evaluate the other on a range of recommendation problems. The algorithm performs well and outperforms all baselines.
\end{abstract}


\section{INTRODUCTION}
\label{sec:introduction}

Most recommender systems recommended a list of $K$ items, such as restaurants, songs, or movies. The user \emph{examines} the recommended list from the first item to the last, and typically clicks on the first item that \emph{attracts} the user. The \emph{cascade model} \cite{craswell08experimental} is a popular model to formulate this kind of user behavior. The items before the first clicked item are \emph{not attractive}, because the user examines these items but does not click on them. The items after the first attractive item are \emph{unobserved}, because the user never examines these items. The key assumption in the cascade model is that each item attracts the user independently of the other items. Under this assumption, the optimal solution in the cascade model, the list of $K$ items that maximizes the probability that the user finds an attractive item, are $K$ most attractive items. The cascade model is simple, intuitive, and surprisingly effective in explaining user behavior \cite{chuklin15click}.

In this paper, we study on an online learning variant of the cascade model, which is known as \emph{cascading bandits} \cite{kveton15cascading}. In this model, the learning agent does not know the preferences of the user over recommended items and the goal is to learn them by interacting with the user. At time $t$, the agent recommends to the user a list of $K$ items out of $L$ candidate items and observes the click of the user. If the user clicks on an item, the agent receives a reward of one. If the user does not click on any item, the agent receives a reward of zero. The performance of the learning agent is evaluated by its cumulative reward in $n$ steps, which is the total number of clicks in $n$ steps. The goal of the agent is to maximize it.

Kveton \etal~\cite{kveton15cascading} proposed two computationally and sample efficient algorithms for cascading bandits. They also proved a $\Omega(L - K)$ lower bound on the regret in cascading bandits, which shows that the regret grows linearly with the number of candidate items $L$. Therefore, cascading bandits are impractical for learning when $L$ is large. Unfortunately, this setting is common practice. For instance, consider the problem of learning a personalized recommender system for $K = 10$ movies from the ground set of $L = 100\text{k}$ movies. In this setting, each movie would have to be shown to the user at least once, which means at least $10\text{k}$ interactions with the recommender system, before the system starts behaving intelligently. Such a system would clearly be impractical. The main contribution of our work is that we propose \emph{linear cascading bandits}, an online learning framework that makes learning in cascading bandits practical at scale. The key step in our approach is that we assume that the attraction probabilities of items can be predicted from the features of items. Features are often available in practice or can be easily derived.

To the best of our knowledge, this is the first work that studies a top-$K$ recommender problem in the bandit setting with cascading feedback and context. Specifically, we make four contributions. First, we propose linear cascading bandits, a variant of cascading bandits where we make an additional assumption that the attraction probabilities of items are a linear function of the features of items. This assumption is the key step in designing a sample efficient learning algorithm for our problem. Second, we propose two computationally efficient learning algorithms, $\cascadelints$ and $\cascadelinucb$, which are motivated by \emph{Thompson sampling (TS)} \cite{thompson33likelihood,agrawal12analysis} and \emph{linear UCB} \cite{abbasi-yadkori11improved,wen15efficient}, We believe this is the first application of linear generalization in the cascade model under partial monitoring feedback. Third, we derive an upper bound on the regret of $\cascadelinucb$ and discuss why a similar upper bound should hold for $\cascadelints$. Finally, we evaluate $\cascadelints$ on a range of recommendation problems; in the domains of restaurant, music, and movie recommendations; and demonstrate that it performs well even when our modeling assumptions are violated.

Our paper is organized as follows. In \cref{sec:background}, we review the cascade model and cascading bandits. In \cref{sec:linear cascading bandits}, we present linear cascading bandits; propose $\cascadelints$ and $\cascadelinucb$; and bound the regret of $\cascadelinucb$. In \cref{sec:experiments}, we evaluate  $\cascadelints$ on several recommendation problems. We review related work in \cref{sec:related work} and conclude in \cref{sec:conclusions}.

To simplify exposition, we denote random variables by boldface letter. We define $[n] = \set{1, \dots, n}$ and denote the cardinality of set $A$ by $\abs{A}$.


\section{BACKGROUND}
\label{sec:background}

In this section, we review the cascade model \cite{craswell08experimental} and cascading bandits \cite{kveton15cascading}.

\subsection{Cascade Model}
\label{sec:cascade model}

The \emph{cascade model} \cite{craswell08experimental} is a popular model of user behavior. In this model, the user is recommended a list of $K$ items $A = (a_1, \dots, a_K) \in \Pi_K(E)$, where $\Pi_K(E)$ is the set of all \emph{$K$-permutations} of some \emph{ground set} $E = [L]$, which is the set of all possibly recommended items. The model is parameterized by $L$ \emph{attraction probabilities} $\bar{w} \in [0, 1]^E$ and the user scans the list $A$ sequentially from the first item $a_1$ to the last $a_K$. After the user examines item $a_k$, the item attracts the user with probability $\bar{w}(a_k)$, \emph{independently} of the other items. If the user is attracted by item $a_k$, the user clicks on it and stop examining the remaining items. If the user is not attracted by item $a_k$, the user examines the next recommended item $a_{k + 1}$. It is easy to see that the probability that item $a_k$ is examined is $\prod_{i = 1}^{k - 1} (1 - \bar{w}(a_i))$, and that the probability that at least one item in $A$ is attractive is $1 - \prod_{i = 1}^{K} (1 - \bar{w}({a_i}))$. This objective is maximized by $K$ most attractive items.

The cascade model is surprising effective in explaining how users scan lists of items \cite{chuklin15click}. The reason is that lower ranked items typically do not get clicked because the user is attracted by higher ranked items, and never examines the rest of the recommended list.

\subsection{Cascading Bandits}
\label{sec:cascading bandits}

Kveton \etal~\cite{kveton15cascading} proposed a learning variant of the cascading model, which is known as a cascading bandit. Formally, a \emph{cascading bandit} is a tuple $B = (E, P, K)$, where $E = [L]$ is a \emph{ground set} of $L$ items, $P$ is a probability distribution over a binary hypercube $\set{0, 1}^E$, and $K \leq L$ is the number of recommended items.

The learning agent interacts with our problem as follows. Let $(\rnd{w}_t)_{t = 1}^n$ be an i.i.d. sequence of $n$ \emph{weights} drawn from $P$, where $\rnd{w}_t \in \set{0, 1}^E$ and $\rnd{w}_t(e)$ is the preference of the user for item $e$ at time $t$. More precisely, $\rnd{w}_t(e) = 1$ if and only if item $e$ attracts the user at time $t$. At time $t$, the agent recommends a list of $K$ items $\rnd{A}_t = (\rnd{a}^t_1, \dots, \rnd{a}^t_K) \in \Pi_K(E)$. The list is a function of the observations of the agent up to time $t$. The user examines the list, from the first item $\rnd{a}^t_1$ to the last $\rnd{a}^t_K$, and clicks on the first attractive item. If the user is not attracted by any item, the user does not click on any item. Then time increases to $t + 1$.

The reward of the agent at time $t$ is one if and only if the user is attracted by at least one item in $\rnd{A}_t$. Formally, the reward at time $t$ can be expressed as $\rnd{r}_t = f(\rnd{A}_t, \rnd{w}_t)$, where $f: \Pi_K(E) \times [0, 1]^E \to [0, 1]$ is a \emph{reward function} and we define it as:
\begin{align*}
  f(A, w) = 1 - \prod_{k = 1}^K (1 - w(a_k))
\end{align*}
for any $A = (a_1, \dots, a_K) \in \Pi_K(E)$ and $w \in [0, 1]^E$. The agent at time $t$ receives feedback:
\begin{align*}
  \rnd{C}_t = \min \set{k \in [K]: \rnd{w}_t(\rnd{a}^t_k) = 1}\,,
\end{align*}
where we assume that $\min \emptyset = \infty$. The feedback $\rnd{C}_t$ is the click of the user. If $\rnd{C}_t \leq K$, the user clicks on item $\rnd{C}_t$. If $\rnd{C}_t = \infty$, the user does not click on any item. Since the user clicks on the first attractive item in the list, the observed weights of all recommended items at time $t$ can be expressed as a function of $\rnd{C}_t$:
\begin{align}
  \rnd{w}_t(\rnd{a}^t_k) = \I{\rnd{C}_t = k} \quad k = 1, \dots, \min \set{\rnd{C}_t, K}\,.
  \label{eq:click}
\end{align}
Accordingly, we say that item $e$ is \emph{observed} at time $t$ if $e = \rnd{a}^t_k$ for some $k \in [\min \set{\rnd{C}_t, K}]$.

Let the attraction weights of items in the ground set $E$ be distributed independently as:
\begin{align*}
  P(w) = \prod_{e \in E} \mathrm{Ber}(w(e); \bar{w}(e))\,,
\end{align*}
where $\mathrm{Ber}(\cdot; \theta)$ is a Bernoulli distribution with mean $\theta$. Then the expected reward for list $A \in \Pi_K(E)$, the probability that at least one item in $A$ is satisfactory, can be expressed as $\EE{f(A, \rnd{w})} = f(A, \bar{w})$, and depends only on the attraction probabilities of individual items in $A$. Therefore, it is sufficient to learn a good approximation to $\bar{w}$ to act optimally.

The agent's policy is evaluated by its \emph{expected cumulative regret}:
\begin{align}
  R(n) = \EE{\sum_{t = 1}^n R(\rnd{A}_t, \rnd{w}_t)}\,,
  \label{eq:expected cumulative regret}
\end{align}
where $R(\rnd{A}_t, \rnd{w}_t) = f(A^\ast, \rnd{w}_t) - f(\rnd{A}_t, \rnd{w}_t)$ is the \emph{instantaneous stochastic regret} of the agent at time $t$ and:
\begin{align*}
  A^\ast = \argmax_{A \in \Pi_K(E)} f(A, \bar{w})
\end{align*}
is the \emph{optimal list} of items, the list that maximizes the reward at any time $t$. For simplicity of exposition, we assume that the optimal solution, as a set, is unique.

\subsection{Algorithm $\cascadeucb$}
\label{sec:algorithm CascadeUCB1}

Kveton \etal~\cite{kveton15cascading} proposed and analyzed two learning algorithms for cascading bandits, $\cascadeucb$ and $\cascadeklucb$. In this section, we review $\cascadeucb$.

$\cascadeucb$ belongs to the family of UCB algorithms. The algorithm operates in three stages. First, it computes the \emph{upper confidence bounds (UCBs)} $\rnd{U}_t \in [0, 1]^E$ on the attraction probabilities of all items in $E$. The UCB of item $e$ at time $t$ is:
\begin{align}
  \rnd{U}_t(e) = \hat{\rnd{w}}_{\rnd{T}_{t - 1}(e)}(e) + c_{t - 1, \rnd{T}_{t - 1}(e)}\,,
  \label{eq:UCB}
\end{align}
where $\hat{\rnd{w}}_s(e)$ is the average of $s$ observed attraction weights of item $e$, $\rnd{T}_t(e)$ is the number of times that item $e$ is observed in $t$ steps, and:
\begin{align*}
  c_{t, s} = \sqrt{(1.5 \log t) / s}
\end{align*}
is the radius of a confidence interval around $\hat{\rnd{w}}_s(e)$ after $t$ steps such that $\bar{w}(e) \in [\hat{\rnd{w}}_s(e) - c_{t, s}, \hat{\rnd{w}}_s(e) + c_{t, s}]$ holds with high probability. Second, $\cascadeucb$ recommends a list of $K$ items with largest UCBs:
\begin{align*}
  \rnd{A}_t = \argmax_{A \in \Pi_K(E)} f(A, \rnd{U}_t)\,.
\end{align*}
Finally, after the user provides feedback $\rnd{C}_t$, the algorithm updates its estimates of the attraction probabilities $\bar{w}(e)$ based on the observed weights of items, which are defined in \eqref{eq:click} for all $e = \rnd{a}^t_k$ such that $k \leq \rnd{C}_t$.


\section{LINEAR CASCADING BANDITS}
\label{sec:linear cascading bandits}

Kveton \etal~\cite{kveton15cascading} showed that the $n$-step regret of $\cascadeucb$ is $O((L - K) (1 / \Delta) \log n)$, where $L$ is the number of items in ground set $E$; $K$ is the number of recommended items; and $\Delta$ is the gap, which measures the sample complexity. This means that the regret increases linearly with the number of items $L$. As a result, $\cascadeucb$ is not practical when $L$ is large. Unfortunately, this setting is common practice. For instance, consider the problem of learning a personalized recommender for $10$ movies from the ground set of $100\text{k}$ movies. To learn, $\cascadeucb$ would need to show each movie to the user at least once, which means that the algorithm would require at least $10$k interactions with the user to start behaving intelligently. This is clearly impractical.

In this work, we propose \emph{practical algorithms} for large-scale cascading bandits, in the setting where $L$ is large. The key assumption, which allows us to learn efficiently, is that we assume that the attraction probability of each item $e$, $\bar{w}(e)$, can be approximated by a linear combination of some known $d$-dimensional feature vector $x_e \in \realset^{d \times 1}$ and an unknown $d$-dimensional parameter vector of $\theta^\ast \in \realset^{d \times 1}$, which is shared among all items. More precisely, we assume that there exists $\theta^\ast \in \Theta$ such that:
\begin{align}
  \bar{w}(e) \approx x_e\transpose \theta^\ast
  \label{eq:linear generalization}
\end{align}
for any $e \in E$. The features are problem specific and we discuss how to construct them in \cref{sec:features}. We propose two learning algorithms, which we call \emph{cascading linear Thompson sampling} ($\cascadelints$) and \emph{cascading linear UCB} ($\cascadelinucb$). We prove that when the above linear generalization is perfect, the regret of $\cascadelinucb$ is independent of $L$ and sublinear in $n$. Therefore, $\cascadelinucb$ is suitable for learning to recommend from large ground sets $E$. We also discuss why a similar regret bound should hold for $\cascadelints$, though we do not prove this bound formally.

\subsection{Algorithms}
\label{sec:algorithm}

Our learning algorithms are based on the ideas of Thompson sampling \cite{thompson33likelihood,agrawal12analysis} and linear UCB \cite{abbasi-yadkori11improved}, and motivated by the recent work of Wen \etal~\cite{wen15efficient}, which proposes computationally and sample efficient algorithms for large-scale stochastic combinatorial semi-bandits. The pseudocode of both algorithms is in Algorithms~\ref{alg:linear TS} and \ref{alg:linear UCB}, and we outline them below. 

Both $\cascadelints$ and $\cascadelinucb$ represent their past observations as a positive-definite matrix $\rnd{M}_t \in \realset^{d \times d}$ and a vector $\rnd{B}_t \in \realset^{d \times 1}$. Specifically, let $\rnd{X}_t$ be a matrix whose rows are the  feature vectors of all observed items in $t$ steps and $\rnd{Y}_t$ be a column vector of all observed attraction weights in $t$ steps. Then:
\begin{align*}
  \rnd{M}_t = \sigma^{-2} \rnd{X}_t\transpose \rnd{X}_t + I_d
\end{align*}
is the \emph{gram matrix} in $t$ steps and:
\begin{align*}
  \rnd{B}_t = \rnd{X}_t\transpose \rnd{Y}_t\,,
\end{align*}
where $I_d$ is a $d \times d$ identity matrix and $\sigma > 0$ is parameter that controls the learning rate.\footnote{Ideally, $\sigma^2$ should be the variance of the observation noises. However, based on recent literature \cite{wen15efficient}, we believe that both algorithms will perform well for a wide range of $\sigma^2$.}

Both $\cascadelints$ and $\cascadelinucb$ operate in three stages. First, they estimated the expected weight of each item $e$ based on their model of the world. $\cascadelints$ randomly samples parameter vector $\theta_t$ from a normal distribution, which approximates its posterior on $\theta^\ast$, and then estimates the expected weight as $x_e\transpose \theta_t$. $\cascadelinucb$ computes an upper confidence bound $\rnd{U}_t(e)$ for each item $e$. Second, both algorithms choose the optimal list $\rnd{A}_t$ with respect to their estimates. Finally, they receive feedback, and update $\rnd{M}_t$ and $\rnd{B}_t$ using \cref{alg:linear update}.


\begin{algorithm}[t]
  \caption{$\cascadelints$}
  \label{alg:linear TS}
  \begin{algorithmic}
    \STATE \textbf{Inputs:} Variance $\sigma^2$
    \STATE
    \STATE // Initialization
    \STATE $\rnd{M}_0 \gets I_d$ and $\rnd{B}_0 \gets \mathbf{0}$
    \STATE
    \FORALL{$t = 1, \dots, n$}
      \STATE $\bar{\theta}_{t - 1} \gets \sigma^{-2} \rnd{M}_{t - 1}^{-1} \rnd{B}_{t - 1}$
      \STATE $\theta_t \sim \cN(\bar{\theta}_{t - 1}, \rnd{M}_{t - 1}^{-1})$
      \STATE
      \STATE // Recommend a list of $K$ items and get feedback
      \FORALL{$k = 1, \dots, K$}
        \STATE $\rnd{a}^t_k \gets \argmax_{e \in [L] - \{\rnd{a}^t_1, \dots, \rnd{a}^t_{k - 1}\}} x_e\transpose \theta_t$
      \ENDFOR
      \STATE $\rnd{A}_t \gets (\rnd{a}^t_1, \dots, \rnd{a}^t_K)$
      \STATE Observe click $\rnd{C}_t \in \set{1, \dots, K, \infty}$
      \STATE Update statistics using \cref{alg:linear update}
  \ENDFOR
  \end{algorithmic}
\end{algorithm}

\begin{algorithm}[t]
  \caption{$\cascadelinucb$}
  \label{alg:linear UCB}
  \begin{algorithmic}
    \STATE \textbf{Inputs:} Variance $\sigma^2$, constant $c$ (\cref{sec:analysis})
    \STATE
    \STATE // Initialization
    \STATE $\rnd{M}_0 \gets I_d$ and $\rnd{B}_0 \gets \mathbf{0}$
    \STATE 
    \FORALL{$t = 1, \dots, n$}
      \STATE $\bar{\theta}_{t - 1} \gets \sigma^{-2} \rnd{M}_{t - 1}^{-1} \rnd{B}_{t - 1}$
      \FORALL{$e \in E$}
        \STATE $\rnd{U}_t(e) \gets \min \set{x_e\transpose \bar{\theta}_{t - 1} +
        c \sqrt{x_e\transpose \rnd{M}_{t - 1}^{-1} x_e}, 1}$
      \ENDFOR
      \STATE
      \STATE // Recommend a list of $K$ items and get feedback
      \FORALL{$k = 1, \dots, K$}
        \STATE $\rnd{a}^t_k \gets \argmax_{e \in [L] - \{\rnd{a}^t_1, \dots, \rnd{a}^t_{k - 1}\}} \rnd{U}_t(e)$
      \ENDFOR
      \STATE $\rnd{A}_t \gets (\rnd{a}^t_1, \dots, \rnd{a}^t_K)$
      \STATE Observe click $\rnd{C}_t \in \set{1, \dots, K, \infty}$
      \STATE Update statistics using \cref{alg:linear update}
  \ENDFOR
  \end{algorithmic}
\end{algorithm}

\begin{algorithm}[t]
  \caption{Update of statistics in Algorithms~\ref{alg:linear TS} and \ref{alg:linear UCB}}
  \label{alg:linear update}
  \begin{algorithmic}
      \STATE $\rnd{M}_t \gets \rnd{M}_{t - 1}$
      \STATE $\rnd{B}_t \gets \rnd{B}_{t - 1}$
      \FORALL{$k = 1, \dots, \min \set{\rnd{C}_t, K}$}
        \STATE $e \gets \rnd{a}^t_k$
        \STATE $\rnd{M}_t \gets \rnd{M}_t + \sigma^{-2} x_e x_e\transpose$
        \STATE $\rnd{B}_t \gets \rnd{B}_t + x_e \I{\rnd{C}_t = k}$
      \ENDFOR
  \end{algorithmic}
 \end{algorithm}

We would like to emphasize that both $\cascadelints$ and $\cascadelinucb$ are computationally efficient. In practice, we would update $\rnd{M}_t^{-1}$ instead of $\rnd{M}_t$. In particular, note that:
\begin{align*}
  \rnd{M}_t \gets \rnd{M}_t + \sigma^{-2} x_e x_e\transpose
\end{align*}
can be equivalently updated as:
\begin{align*}
  \rnd{M}_t^{-1} \gets
  \rnd{M}_t^{-1} - \frac{\rnd{M}_t^{-1} x_e x_e\transpose \rnd{M}_t^{-1}}{x_e\transpose \rnd{M}_t^{-1} x_e + \sigma^2}\,,
\end{align*}
and hence $\rnd{M}_t^{-1}$ can be updated incrementally and computationally efficiently in $O(d^2)$ time. It is easy to to see that the per-step time complexities of both $\cascadelints$ and $\cascadelinucb$ are $O(L (d^2 + K))$.



\subsection{Analysis and Discussion}
\label{sec:analysis}

We first derive a regret bound on $\cascadelinucb$, under the assumptions that (1) $\bar{w}(e) = x_e\transpose \theta^\ast$ for all $e \in E$ and (2) $\| x_e \|_2 \leq 1$ for all $e \in E$. Note that condition (2) can be always ensured by rescaling feature vectors. The regret bound is detailed below.

\begin{theorem}
\label{thm:bound}
Under the above assumptions, for any $\sigma > 0$ and any
\[
c \geq  \frac{1}{\sigma}  \sqrt{d \log \left( 1 + \frac{nK}{d \sigma^2} \right)+ 2 \log \left ( nK \right) } + \|\theta^*
\|_2 ,
\]
if we run $\cascadelinucb$ with parameters $\sigma$ and $c$, then
\[
R(n) \leq 2c K \sqrt{
\frac{dn \log 
 \left[
 1+  \frac{nK}{d\sigma^2}
 \right]}{\log \left( 1 + \frac{1}{\sigma^2}   \right)} }+ 1.
\]
\end{theorem}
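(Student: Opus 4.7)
The plan follows the standard optimism-plus-elliptical-potential template for linear stochastic bandits, with the cascade structure entering only through the reward function $f$. Three ingredients drive the argument: (i) a high-probability confidence ellipsoid around $\theta^\ast$; (ii) a reduction, via monotonicity and Lipschitzness of $f$, from the per-step regret to a sum of confidence widths at the recommended items; and (iii) a potential bound on the accumulated widths. The $+1$ in the bound comes from tuning the confidence level to $1/(nK)$ so that the low-probability failure event (on which the regret is at most $1$ per step) contributes at most $1$ in total.

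First I would define the good event $\cG = \{\forall\, t \leq n: \|\bar{\theta}_{t-1} - \theta^\ast\|_{\rnd{M}_{t-1}} \leq c\}$ and apply the self-normalized concentration inequality of Abbasi-Yadkori \etal~to show $\Pr(\cG) \geq 1 - 1/(nK)$ for any $c$ satisfying the stated lower bound. The observed weights $\rnd{w}_t(\rnd{a}^t_k) \in \{0,1\}$ are $1/2$-sub-Gaussian, the effective sample size is at most $nK$, and the identity initialization of $\rnd{M}_0$ matches the $\sigma^2$-regularization in the concentration lemma, so this step is textbook. By Cauchy-Schwarz, on $\cG$ we have $|x_e^\transpose \bar{\theta}_{t-1} - x_e^\transpose \theta^\ast| \leq c\sqrt{x_e^\transpose \rnd{M}_{t-1}^{-1} x_e}$ for every item $e$, and hence the clipped UCBs satisfy $\rnd{U}_t(e) \geq \bar{w}(e)$.

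Still on $\cG$, monotonicity of $f(A,\cdot)$ on $[0,1]^E$ together with the optimistic selection of $\rnd{A}_t$ gives $f(A^\ast,\bar{w}) \leq f(A^\ast,\rnd{U}_t) \leq f(\rnd{A}_t,\rnd{U}_t)$, and the elementary inequality $\prod_k(1-a_k) - \prod_k(1-b_k) \leq \sum_k (a_k - b_k)$ for $0 \leq b_k \leq a_k \leq 1$ yields
\[
  f(\rnd{A}_t,\rnd{U}_t) - f(\rnd{A}_t,\bar{w}) \leq \sum_{k=1}^K \bigl(\rnd{U}_t(\rnd{a}^t_k) - \bar{w}(\rnd{a}^t_k)\bigr) \leq 2c \sum_{k=1}^K \sqrt{x_{\rnd{a}^t_k}^\transpose \rnd{M}_{t-1}^{-1} x_{\rnd{a}^t_k}}.
\]
Summing over $t$ and applying Cauchy-Schwarz across both $k$ and $t$ gives
\[
  R(n) \leq 2c \sqrt{nK \cdot \sum_{t=1}^n \sum_{k=1}^K x_{\rnd{a}^t_k}^\transpose \rnd{M}_{t-1}^{-1} x_{\rnd{a}^t_k}} + 1,
\]
and the final step is the elliptical-potential / determinant-trace lemma applied to the $\sigma^{-2}$-scaled gram matrix, bounding the inner quadratic sum by $Kd\log(1 + nK/(d\sigma^2))/\log(1 + 1/\sigma^2)$, from which the stated bound follows.

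The step I expect to be the main obstacle is that last elliptical-potential bound. The subtlety is that the sum ranges over all $K$ positions of $\rnd{A}_t$, whereas because of cascading feedback $\rnd{M}_t$ is only updated for the observed prefix $k \leq \rnd{C}_t$. I would handle this by viewing each round as a rank-$K$ (hypothetical) update and relating $\sum_k x^\transpose \rnd{M}_{t-1}^{-1} x$ to $\log\det(\rnd{M}_t) - \log\det(\rnd{M}_{t-1})$ through a block form of the determinant-trace inequality, using $\|x_e\|_2 \leq 1$ to control the eigenvalues and the inequality $\log(1+a)/a \geq \log(1+1/\sigma^2)/(1/\sigma^2)$ to produce the $\log(1+1/\sigma^2)$ denominator. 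A cleaner alternative is to replace the crude Lipschitz bound by the reach-probability decomposition of $f(\rnd{A}_t,\rnd{U}_t) - f(\rnd{A}_t,\bar{w})$, which naturally restricts the sum to the observed positions where the rank-$1$ elliptical-potential lemma applies directly, at the price of a less tight constant.
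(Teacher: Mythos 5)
Your main-line decomposition has a genuine gap, and it is precisely the gap the paper's proof is engineered to avoid. After optimism you bound the per-step regret by $\sum_{k=1}^K \bigl(\rnd{U}_t(\rnd{a}^t_k) - \bar{w}(\rnd{a}^t_k)\bigr) \leq 2c\sum_{k=1}^K \|x_{\rnd{a}^t_k}\|_{\rnd{M}_{t-1}^{-1}}$, a sum over \emph{all} $K$ recommended positions. That inequality is valid, but no elliptical-potential argument can bound $\sum_{t=1}^n\sum_{k=1}^K \|x_{\rnd{a}^t_k}\|_{\rnd{M}_{t-1}^{-1}}$ sublinearly: $\rnd{M}_t$ accumulates only the features of \emph{observed} (examined) items, so the width of an item sitting below an always-clicked item never shrinks. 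For instance, with $K = 2$, if an item with $\bar{w} = 1$ occupies slot one (its UCB stays maximal), then an item with an orthogonal feature vector in slot two is never examined, so $\|x_{\rnd{a}^t_2}\|_{\rnd{M}_{t-1}^{-1}} = 1$ for every $t$ and the sum is $\Omega(n)$, even though the true regret is zero. Your first proposed repair --- treating each round as a hypothetical rank-$K$ update and using a block determinant--trace inequality --- does not fix this: the matrix the algorithm actually uses, both in the UCBs and in the confidence event, is updated only on the observed prefix $k \leq \min\set{\rnd{C}_t, K}$, and substituting a hypothetically-updated matrix neither bounds the real widths at unobserved items nor preserves the self-normalized concentration step (whose martingale runs over observed positions only).

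The correct repair is your ``cleaner alternative,'' and it is exactly the paper's proof rather than a lossier variant. Decompose the \emph{stochastic} regret $f(A^\ast,\rnd{w}_t) - f(\rnd{A}_t,\rnd{w}_t)$ via the telescoping identity $\prod_k b_k - \prod_k c_k = \sum_k \bigl(\prod_{i<k} b_i\bigr)(b_k - c_k)\bigl(\prod_{j>k} c_j\bigr)$, which weights the $k$-th difference by the \emph{realized} examination indicator $\I{\cG_{t,k}} = \prod_{i<k}(1 - \rnd{w}_t(\rnd{a}^t_i))$; after matching each $\rnd{a}^t_k$ to a distinct optimal item $\rnd{a}^{*,t}_k$ and taking conditional expectations (using independence of attraction across items), this yields $R(n) \leq \EE{\sum_{t=1}^n \sum_{k=1}^K \I{\cG_{t,k}}\,[\bar{w}(\rnd{a}^{*,t}_k) - \bar{w}(\rnd{a}^t_k)]}$. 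Since $\I{\cG_{t,k}} = 1$ exactly when $k \leq \min\set{\rnd{C}_t, K} =: \rnd{K}_t$, the weighted sum of widths coincides with the sum over observed positions, which is exactly where the determinant argument applies: writing $z_{t,k} = \|x_{\rnd{a}^t_k}\|_{\rnd{M}_{t-1}^{-1}}$, one gets $\sum_{t=1}^n \sum_{k=1}^{\rnd{K}_t} z_{t,k} \leq K\sqrt{dn\log\left[1 + \frac{nK}{d\sigma^2}\right]/\log\left(1 + \frac{1}{\sigma^2}\right)}$, where the leading factor $K$ (rather than $\sqrt{K}$) arises because all widths within a round are measured against the same matrix $\rnd{M}_{t-1}$. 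Combined with $P(\bar{\cE}) \leq 1/(nK)$, this gives the theorem's bound with identical constants --- contrary to your remark, nothing is lost in tightness. Your confidence-event construction and the choice $\delta = 1/(nK)$ producing the additive $+1$ are otherwise consistent with the paper.
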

Note that if we choose $\sigma=1$ and
\[
c = \sqrt{d \log \left( 1 + \frac{nK}{d} \right)+ 2 \log \left ( nK \right) } + \eta,
\]
for some constant $\eta \geq \| \theta^\ast\|_2$, then $R (n) \leq \tilde{O} \left( K d \sqrt{n}\right)$ where the $\tilde{O}$ notation hides logarithmic factors.

The proof is in Appendix and we outline it below. First, we define event $\cG_{t, k} = \set{\text{item $\rnd{a}^t_k$ is examined in step $t$}}$ for any time $t$ and $k \in [K]$, and bound the $n$-step regret as
\begin{align*}
  R(n) \leq \EE{\sum_{t = 1}^n \sum_{k = 1}^K \I{\cG_{t, k}} [\bar{w}(\rnd{a}^{*, t}_k) - \bar{w}(\rnd{a}^t_k)]}\,,
\end{align*}
where $\rnd{a}^{*,t}_k$ is an optimal item in $A^\ast$ matched to item $\rnd{a}^t_k$ in step $t$. Second, we define an event
  \[ \cE=\left \{ \left | x_e^T (\bar{\theta}_{t-1} - \theta^*) \right | \leq c \| x_e\|_{\rnd{M}_{t - 1}^{-1}}\, \forall t \leq n, \, \forall e \in E\right \}, \]
where $\| x_e\|_{\rnd{M}_{t - 1}^{-1}} = \sqrt{x_e\transpose \rnd{M}_{t - 1}^{-1} x_e}$. Then we prove a high-probability bound $P(\cE) \geq 1-1/nK$ for any $c$ that satisfies the condition of \cref{thm:bound}. Finally, we show that by conditioning on $\cE$, we have
\begin{align*}
 & \sum_{t = 1}^n \sum_{k = 1}^K \I{\cG_{t, k}} [\bar{w}(\rnd{a}^{*, t}_k) - \bar{w}(\rnd{a}^t_k)] \\
 & \quad \leq 2 c \sum_{t = 1}^n \sum_{k = 1}^K \I{\cG_{t, k}} \|x_{\rnd{a}^t_k}\|_{\rnd{M}_{t - 1}^{-1}} \\
 & \quad \leq 2 c K \sqrt{\frac{d n \log\left[1 + \frac{n K}{d \sigma^2}\right]}{\log\left(1 + \frac{1}{\sigma^2}\right)}}\,,
\end{align*}
where the first inequality follows from the definition of $\cE$ and the second inequality follows from a worst-case bound.
The bound in \cref{thm:bound} follows from putting the above results together.

Recent work \cite{russo14learning,wen15efficient} demonstrated close relationships between UCB-like algorithms and Thompson sampling algorithms in related bandit problems. Therefore, we believe that a similar regret bound to that in \cref{thm:bound} also holds for $\cascadelints$. 
However, it is highly non-trivial to derive a regret bound for $\cascadelints$. Unlike in \cite{wen15efficient}, $\cascadelints$ cannot be analyzed from the Bayesian perspective because the Gaussian posterior is inconsistent with the fact that $\bar{w}(e)$ is bounded in $[0, 1]$. 
Moreover, a subtle statistical dependence between partial monitoring and Thompson sampling prevents a frequentist analysis similar to 
that in \cite{agrawal13thompson}. Therefore, we leave the formal analysis of $\cascadelints$ for future work. It is well known that Thompson sampling tends to outperform UCB-like algorithms in practice \cite{agrawal12analysis}. Therefore, we only empirically evaluate $\cascadelints$.


\section{EXPERIMENTS}
\label{sec:experiments}

We validate $\cascadelints$ on several problems of various sizes and from various domains. In each problem, we conduct several experiments that demonstrate that our approach is scalable and stable with respect to its tunable parameters, the number of recommended items $K$ and the number of features $d$.

Our experimental section is organized as follows. In \cref{sec:experimental setting}, we outline the experiments that are conducted on each dataset. In \cref{sec:metrics and baselines}, we introduce our metrics and baselines. In \cref{sec:features}, we describe how we construct the features of items $E$. We present our empirical results in the rest of the section.

\subsection{Experimental Setting}
\label{sec:experimental setting}

All of our learning problems can be viewed as follows. The feedback of users is a matrix $W \in \set{0, 1}^{m \times L}$, where row $i$ corresponds to user $i \in [m]$ and column $j$ corresponds to item $j \in E$. Entry $(i, j)$ of $W$, $W_{i, j} \in \set{0, 1}$, indicates that user $i$ is attracted by item $j$. The user at time $t$, the row of $W$, is chosen at random from the pool of all users. Our goal is to learn the list of items $A^\ast$, the columns of $W$, that maximizes the probability that the user at time $t$ is attracted by at least one recommended item.

In each of our problems, we conduct a set of experiments. In the first experiment, we compare $\cascadelints$ to baselines (\cref{sec:metrics and baselines}) and also evaluate its scalability. We experiment with three variants of our problems: $L = 16$ items, $L = 256$ items, and the maximum possible value of $L$ in a given experiment. The number of recommended items is $K = 4$ and the number of features is $d = 20$.

In the second experiment, we show that the performance of $\cascadelints$ is robust with respect to the number of features $d$, in the sense that $d$ affects the performance but $\cascadelints$ performs reasonably well for all settings of $d$. We experiment with three settings for the number of features: $d = 10$, $d = 20$, and $d = 40$. The ground set contains $L = 256$ items and the number of recommended items is $K = 4$.

In the third experiment, we evaluate $\cascadelints$ on an interesting subset of each dataset, such as \emph{Rock Songs}. The setting of this experiment is identical to the second experiment. This experiment validates that $\cascadelints$ can also learn to recommend items in the context, of a subset of the dataset.

In the last experiment, we evaluate how the performance of $\cascadelints$ varies with the number of recommended items $K$. We experiment with three settings for the number of recommended items: $K = 4$, $K = 8$, and $K = 12$. The ground set contains $L = 256$ items and the number of features is $d = 20$.

All experiments are conducted for $n = 100\text{k}$ steps and averaged over $10$ randomly initialized runs. The tunable parameter $\sigma$ in $\cascadelints$ is set to $1$.

\subsection{Metrics and Baselines}
\label{sec:metrics and baselines}

The performance of $\cascadelints$ is evaluated by its expected cumulative regret, which is defined in \eqref{eq:expected cumulative regret}. In most of our experiments, our modeling assumptions are violated. In particular, the items are not guaranteed to attract users independently because the attraction indicators $\rnd{w}_t(e)$ are correlated across items $e$. The result is that:
\begin{align*}
  A^\ast =
  \argmax_{A \in \Pi_K(E)} \EE{f(A, \rnd{w})} >
  \argmax_{A \in \Pi_K(E)} f(A, \bar{w})\,.
\end{align*}
It is NP-hard to find $A^\ast$, because $\EE{f(A, \rnd{w})}$ does not decompose into the product of expectations as we assume in our model (\cref{sec:cascading bandits}). However, since $\EE{f(A, \rnd{w})}$ is submodular and monotone in $A$, a $(1 - 1 / e)$ approximation to $A^\ast$ can be computed greedily, by iteratively adding items that attract most users that are not attracted by any previously added item. We denote this approximation by $A^\ast$ and use it instead of the optimal solution.

We compare $\cascadelints$ to two baselines. The first baseline is $\cascadeucb$ (\cref{sec:algorithm CascadeUCB1}). This baseline does not leverage the structure of our problem and learns the attraction probability of each item $e$ independently. The second baseline is $\rankedlints$ (\cref{alg:ranked linear TS}). This baseline is a variant of ranked bandits (\cref{sec:related work}), where the base bandit algorithm is $\lints$. This base algorithm is the same as in $\cascadelints$. Therefore, any observed difference in the performance of cascading and ranked bandits must be due to the efficiency of using the base algorithm, and not the algorithm itself. In this sense, our comparison of $\cascadelints$ and $\rankedlints$ is fair. The tunable parameter $\sigma$ in $\rankedlints$ is also set to $1$.

\subsection{Features}
\label{sec:features}

In most recommender problems, good features of items are rarely available. Thus, they are typically learned from data \cite{koren09matrix}. As an example, in movie recommendations, all state of the art approaches are based on collaborative filtering rather than on the features of movies, such as movie genres.

Motivated by the successes of collaborative filtering in recommender systems, we derive the features of our items using low-rank matrix factorization. In particular, let $W \in \set{0, 1}^{m \times L}$ be our feedback matrix for $m$ users and $L$ items. We randomly divide the rows of $W$ into two matrices, training matrix $W_\text{train} \in \set{0, 1}^{(m / 2) \times L}$ and test matrix $W_\text{test} \in \set{0, 1}^{(m / 2) \times L}$. We use $W_\text{train}$ to learn the features of items and $W_\text{test}$ in place of $W$ to evaluate our learning algorithms. Most existing real-world recommender systems already have some data about their users. Such data can be used to construct $W_\text{train}$.

Let $W_\text{train} \approx U \Sigma V\transpose$ be rank-$d$ truncated SVD of $W_\text{train}$, where $U \in \realset^{(m / 2) \times d}$, $\Sigma \in \realset^{d \times d}$, and $V \in \realset^{L \times d}$. Then the features of items are the rows of $V \Sigma$. Specifically, for each item $e \in E$ and feature $i \in [d]$, $x_e(i) = V_{e, i} \Sigma_{i, i}$.

\begin{algorithm}[t]
  \caption{Ranked bandits with linear TS.}
  \label{alg:ranked linear TS}
  \begin{algorithmic}
     \STATE \textbf{Inputs:} Variance $\sigma^2$
    \STATE
    \STATE // Initialization
    \STATE $\forall k \in [K]: \rnd{M}^k_0 \gets  I_d$ and $\rnd{B}^k_0 \gets \mathbf{0}$
    \STATE
    \FORALL{$t = 1, \dots, n$}
      \FORALL{$k = 1, \dots, K$}
        \STATE $\bar{\theta}^k_{t - 1} \gets \sigma^{-2} (\rnd{M}^k_{t - 1})^{-1} \rnd{B}^k_{t - 1}$
        \STATE $\theta^k_t \sim \cN(\bar{\theta}^k_{t - 1}, (\rnd{M}^k_{t - 1})^{-1})$
        \STATE $\rnd{a}^t_k \gets \argmax_{e \in [L] - \{\rnd{a}^t_1, \dots, \rnd{a}^t_{k - 1}\}} x_e\transpose \theta^k_t$
      \ENDFOR
      \STATE
      \STATE // Recommend a list of $K$ items and get feedback
      \STATE $\rnd{A}_t \gets (\rnd{a}^t_1, \dots, \rnd{a}^t_K)$
      \STATE Observe click $\rnd{C}_t \in \set{1, \dots, K, \infty}$
      \STATE
      \STATE // Update statistics
      \STATE $\forall k \in [K]: \rnd{M}^k_t \gets \rnd{M}^k_{t - 1}$
      \STATE $\forall k \in [K]: \rnd{B}^k_t \gets \rnd{B}^k_{t - 1}$
      \FORALL{$k = 1, \dots, \min \set{\rnd{C}_t, K}$}
        \STATE $e \gets \rnd{a}^t_k$
        \STATE $\rnd{M}^k_t \gets \rnd{M}^k_t + \sigma^{-2} x_e x_e\transpose$
        \STATE $\rnd{B}^k_t \gets \rnd{B}^k_t + x_e \I{\rnd{C}_t = k}$
      \ENDFOR
    \ENDFOR
  \end{algorithmic}
\end{algorithm}

\begin{figure*}[t]
\centering
\includegraphics[width=0.75\textwidth]{./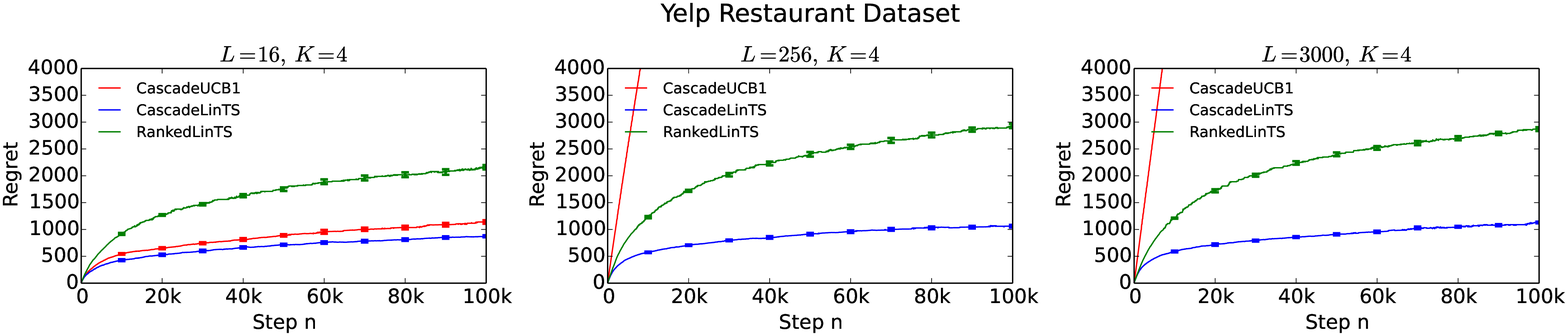}
\includegraphics[width=0.75\textwidth]{./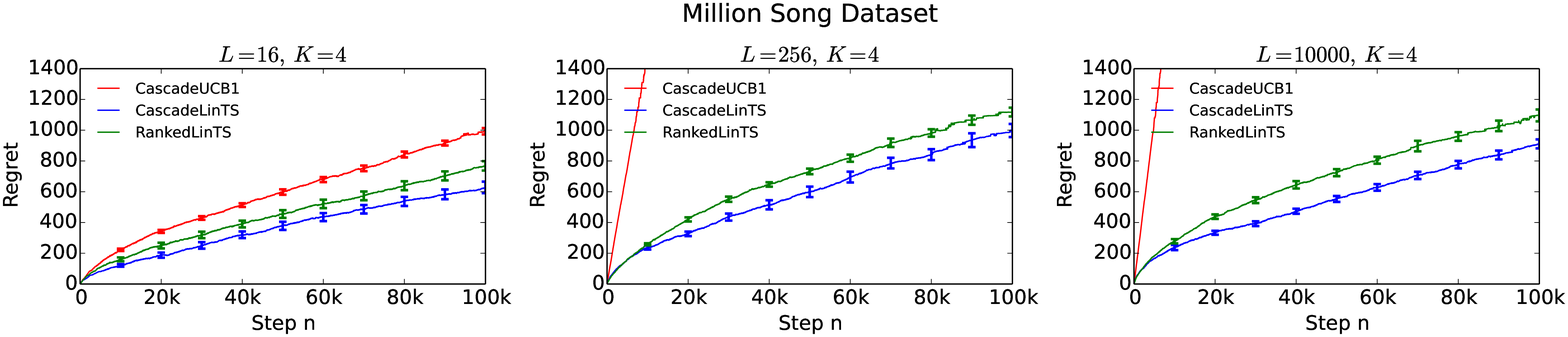}
\includegraphics[width=0.75\textwidth]{./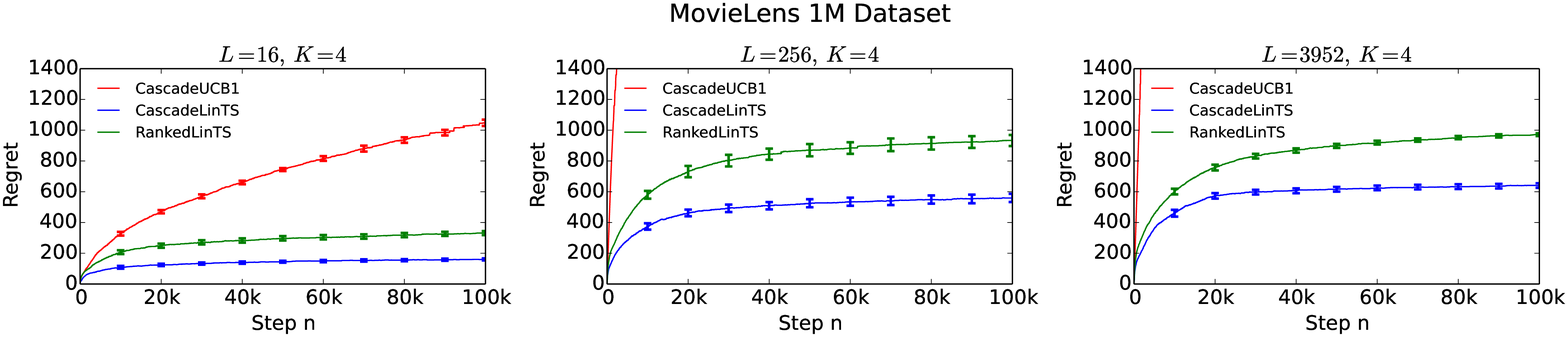}
\caption{The $n$-step regret of $\cascadeucb$, $\cascadelints$ and $\rankedlints$ on three problems. We vary the number of items in the ground set $E$, from $L = 16$ to the maximum value in each problem.}
\label{fig:exp1}
\end{figure*}

\begin{figure*}[t]
\centering
\includegraphics[width = \textwidth]{./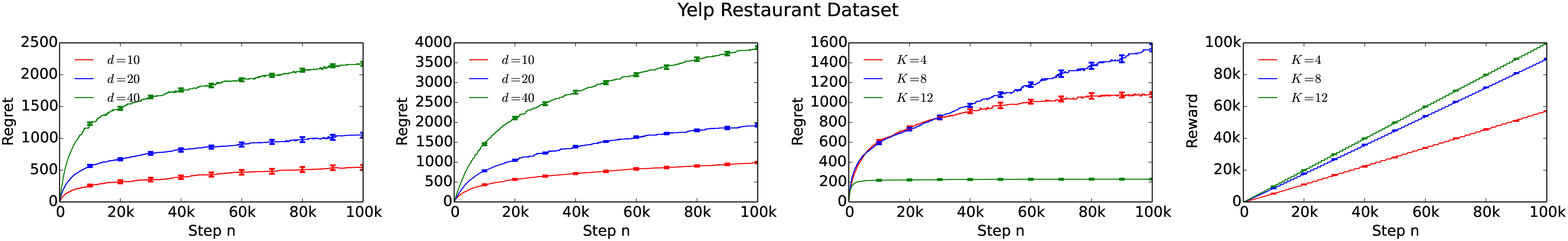}
\includegraphics[width = \textwidth]{./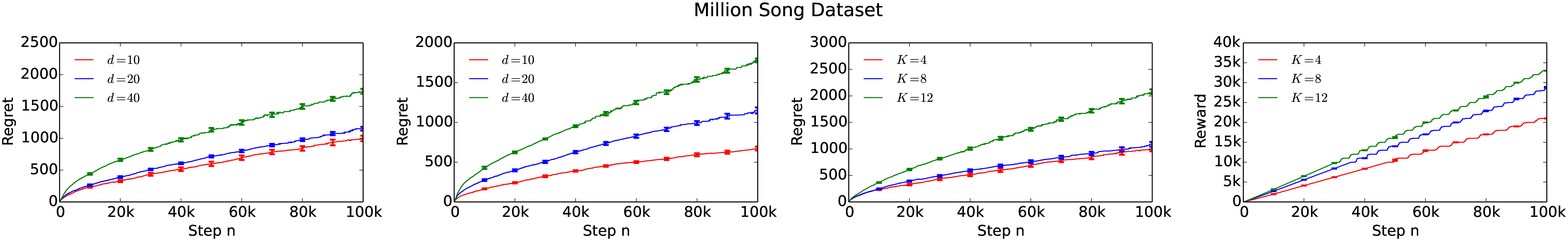}
\includegraphics[width = \textwidth]{./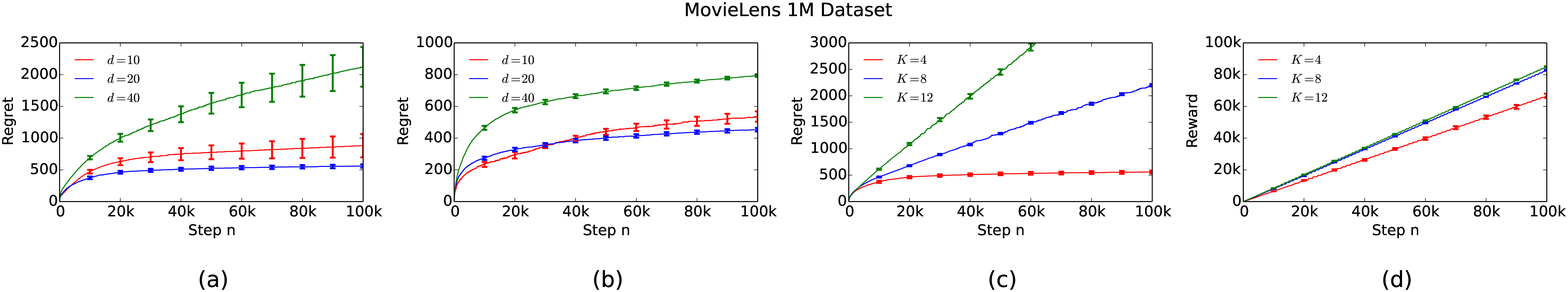}
\caption{{\bf a}. The $n$-step regret of $\cascadelints$ for varying number of features $d$. {\bf b}. The $n$-step regret of $\cascadelints$ in a subset of each dataset for varying number of features $d$. {\bf c}. The $n$-step regret of $\cascadelints$ for varying number of recommended items $K$. {\bf d}. The $n$-step reward of $\cascadelints$ for varying number of recommended items $K$.}
\label{fig:exp2-4}
\end{figure*}


\subsection{Restaurant Recommendations}
\label{sec:experiment hao}

Our dataset is from Yelp Dataset Challenge\footnote{\url{https://www.yelp.com/dataset_challenge}}. This dataset has five parts, including business information, checkin information, review information, tip information, and user information. We only consider the business and review information. The dataset contains $78\text{k}$ businesses, out of which $11\text{k}$ are restaurants; and $2.2\text{M}$ reviews written by $550\text{k}$ users. We extract $L = 3\text{k}$ most reviewed restaurants and $m = 20\text{k}$ most reviewing users.

Our objective is to maximize the probability that the user is attracted by at least one recommended restaurant. We build the model of users from past review data and assume that the user is attracted by the restaurant if the user reviewed this restaurant before. This indicates that the user visited the restaurant at some point in time, likely because the restaurant attracted the user at that time.

\subsubsection{Results}

The results of our first experiment are reported in \cref{fig:exp1}. When the ground set is small, $L = 16$, all compared methods perform similarly. In particular, the regret of $\cascadelints$ is similar to that of $\rankedlints$. The regret of $\cascadeucb$ is about two times larger than that of $\cascadelints$. As the size of the ground set increases, the gap between $\cascadelints$ and the other methods increases. In particular, when $L = 3\text{k}$, the regret of $\cascadeucb$ is orders of magnitude larger than that of $\cascadelints$, and the regret of $\rankedlints$ is almost three times larger.

In the second experiment (\cref{fig:exp2-4}a), we observe that $\cascadelints$ performs well for all settings of $d$. When the number of features doubles to $d = 40$, the regret roughly doubles. When the number of features is halved to $d = 10$, the regret improves and is roughly halved.

In the third experiment (\cref{fig:exp2-4}b), $\cascadelints$ is evaluated on the subset of \emph{American Restaurants}. This is the largest restaurant category in our dataset. We observe that $\cascadelints$ can learn for any number of features $d$, similarly to \cref{fig:exp2-4}a.

In the last experiment (\cref{fig:exp2-4}c), we observe that the regret of $\cascadelints$ increases with the number of recommended items, from $K = 4$ to $K = 8$. This result is surprising and seems to contradict to Kveton \etal~\cite{kveton15cascading}, who find both theoretically and empirically that the regret in cascading bandits decreases with the number of recommended items $K$. We investigate this further and plot the cumulative reward of $\cascadelints$ in \cref{fig:exp2-4}d. The reward increases with $K$, which is expected and validates that $\cascadelints$ learns better policies for larger $K$. Therefore, the increase in the regret in \cref{fig:exp2-4}c must be due to the fact that the expected reward of the optimal solution, $f(A^\ast, \bar{w})$, increases faster with $K$ than that of the learned policies. We believe that the optimal solutions for larger $K$ are harder to learn because our modeling assumptions are violated. In particular, the linear generalization in \eqref{eq:linear generalization} is imperfect and the items in $E$ are not guaranteed to attract users independently.

%


\subsection{Million Song Recommendation}
\label{sec:experiment kenny}

Million Song Dataset\footnote{\url{http://labrosa.ee.columbia.edu/millionsong/}} is a collection of audio features and metadata for a million contemporary pop songs. Instead of storing any audio, the dataset consists of features derived from the audio, user-song profile data, and genres of songs. 
We extract $L = 10\text{k}$ most popular songs from this dataset, as measured by the number of song-listening events; and $m = 400\text{k}$ most active users, as measured by the number of song-listening events.

Our objective is to maximize the probability that the user is attracted with at least one recommended song and plays it. We build the model of users from their past listening patterns and assume that the user is attracted by the song if the user listened to this song before. This indicates that the user was attracted by the song at some point in time.

\subsubsection{Results}

The results of our first experiment are reported in \cref{fig:exp1}. Similarly to \cref{sec:experiment hao}, we observe that when the ground set is small, $L = 16$, the regret of all compared methods is similar. As the size of the ground set increases, the gap between $\cascadeucb$ and the rest of the methods increases, and the regret of $\cascadeucb$ is orders of magnitude larger than that of $\cascadelints$. The regret of $\cascadelints$ is similar to that of $\rankedlints$ for all settings of $L$.

We report the regret of $\cascadelints$ for various numbers of features $d$, on the whole dataset and its subset of \emph{Rock Songs}, in \cref{fig:exp2-4}a and \ref{fig:exp2-4}b, respectively. Similarly to \cref{sec:experiment hao}, we observe that $\cascadelints$ performs well for all settings of $d$. The lowest regret in both experiments is achieved at $d = 10$.

In the last experiment (\cref{fig:exp2-4}c), we observe that the regret of $\cascadelints$ increases with the number of recommended items $K$. As in \cref{sec:experiment hao}, we observe that the cumulative reward of our learned policies increases with $K$. Therefore, the increase in the regret must be due to the fact that the expected reward of the optimal solution, $f(A^\ast, \bar{w})$, increases faster with $K$ than that of the learned policies. This is due to the mismatch between our model and real-world data.

\subsection{Movie Recommendation}
\label{sec:experiment shi}

MovieLens datasets\footnote{\url{http://grouplens.org/datasets/movielens/}} contain the ratings of users for movies from the MovieLens website. The datasets come in different sizes and we choose MovieLens 1M for our experiments. This dataset contains $1\text{M}$ anonymous ratings of $4\text{k}$ movies by $6\text{k}$ users who joined MovieLens in 2000.

We build the model of users from their historical ratings. The ratings are on a $5$-star scale and we assume the user is attracted by a movie if the user rates it with more than $3$ stars. Thus, the feedback matrix is defined as $W_{i, j} = \I{\text{user $i$ rates movie $j$ with more than $3$ stars}}$. Our goal is to maximize the probability of recommending at least one attractive movie.

\subsubsection{Results}

The results of our first experiment are reported in \cref{fig:exp1}. Similarly to \cref{sec:experiment hao}, we observe that the regret of all compared methods is similar when the ground set is small, $L$ = 16. The gap between $\cascadeucb$ and the rest of the methods increases when the size of the ground set increases. In particular, the regret of $\cascadeucb$ is orders of magnitude larger than that of $\cascadelints$. The regret of $\cascadelints$ is always lower than that of $\rankedlints$ for all settings of $L$.

We report the regret of $\cascadelints$ for various numbers of features $d$, on the whole dataset and its subset of \emph{Adventures}, in \cref{fig:exp2-4}a and \ref{fig:exp2-4}b, respectively. Similarly to Sections~\ref{sec:experiment hao} and \ref{sec:experiment kenny}, we observe that $\cascadelints$ performs well for all settings of $d$. The lowest regret in both experiments is achieved at $d = 20$.

In the last experiment (\cref{fig:exp2-4}c), we observe that the regret of $\cascadelints$ increases with the number of recommended items $K$. As in Sections~\ref{sec:experiment hao} and \ref{sec:experiment kenny}, the cumulative reward of our learned policies increases with $K$. Therefore, the increase in the regret must be due to the fact that the expected reward of the optimal solution, $f(A^\ast, \bar{w})$, increases faster with $K$ than that of the learned policies.

\label{sec:results shi}


\section{RELATED WORK}
\label{sec:related work}

Our work is closely related to \emph{cascading bandits} \cite{kveton15cascading,combes15learning}, which are learning variants of the cascade model of user behavior \cite{craswell08experimental}. The key difference is that we assume that the attraction weights of items are a linear function of known feature vectors, which are associated with each item; and an unknown parameter vector, which is learned. This leads to very efficient learning algorithms whose regret is sublinear in the number of items $L$. We compare $\cascadelints$ to $\cascadeucb$, one of the proposed algorithms by Kveton \etal~ \cite{kveton15cascading}, in \cref{sec:experiments}.

\emph{Ranked bandits} \cite{radlinski08learning} are a popular approach in learning to rank. The key idea in ranked bandits is to model each position in the recommended list as an independent bandit problem, which is then solved by a \emph{base bandit algorithm}. The solutions in ranked bandits are $(1 - 1 / e)$ approximate and their regret grows linearly with the number of recommended items $K$. On the other hand, ranked bandits do not assume that items attract the user independently. Slivkins \etal~\cite{slivkins13ranked} proposed contextual ranked bandits. We compare $\cascadelints$ to contextual ranked bandits with linear generalization in \cref{sec:experiments}.

Our learning problem is a partial monitoring problem where we do not observe the attraction weights of all recommended items. Bartok \etal~\cite{bartok12adaptive} studied general partial monitoring problems.
The algorithm of Bartok \etal~\cite{bartok12adaptive} scales at least linearly with the number of actions, which is $L \choose K$ in our setting. Therefore, the algorithm is impractical for large $L$ and moderate $K$. Agrawal \etal~\cite{agrawal89asymptotically} studied a variant of partial monitoring where the reward is observed. The algorithm of Agrawal \etal~\cite{agrawal89asymptotically} cannot be applied to our problem because the algorithm assumes a finite parameter set. Lin \etal~\cite{lin14combinatorial} and Kveton \etal~\cite{kveton15combinatorial} studied combinatorial partial monitoring. Our feedback model is similar to that of Kveton \etal~\cite{kveton15combinatorial}. Therefore, we believe that our algorithm and analysis can be relatively easily generalized to combinatorial action sets.

Our learning problem is combinatorial as we learn $K$ most attractive items out of $L$ candidate items. In this sense, our work is related to stochastic combinatorial bandits, which are frequently studied with a linear reward function and semi-bandit feedback \cite{gai12combinatorial,chen13combinatorial,kveton14matroid,kveton15tight,wen15efficient,combes15combinatorial}.  Our work differs from these approaches in both the reward function and feedback. Our reward function is a non-linear function of unknown parameters. Our feedback model is less than semi-bandit, because the learning agent does not observe the attraction weights of all recommended items.


\section{CONCLUSIONS}
\label{sec:conclusions}

In this work, we propose linear cascading bandits, a framework for learning to recommend in the cascade model at scale. The key assumption in linear cascading bandits is that the attraction probabilities of items are a linear function of the features of items, which are known; and an unknown parameter vector, which is unknown and we learn it. We design two algorithms for solving our problem, $\cascadelints$ and $\cascadelinucb$. We bound the regret of $\cascadelinucb$ and suggest that a similar regret bound can be proved for $\cascadelints$. We comprehensively evaluate $\cascadelints$ on a range of recommendation problems and compare it to several baselines. We report orders of magnitude improvements over learning algorithms that do not leverage the structure of our problem, the features of items. We observe empirically that $\cascadelints$ performs very well.

We leave open several questions of interest. For instance, we only bound the regret of $\cascadelinucb$. Based on the existing work \cite{wen15efficient}, we believe that a similar regret bound can be proved for $\cascadelints$. Moreover, note that our analysis of $\cascadelinucb$ is under the assumption that items attract the user independently and that the linear generalization is perfect. Both of these assumptions tend to be violated in practice. Our current analysis cannot explain this behavior and we leave it for future work.

The main limitation of the cascade model \cite{craswell08experimental} is that the user clicks on at most one item. This assumption is often violated in practice. Recently, Katariya \etal~\cite{katariya16dcm} proposed a generalization of cascading bandits to multiple clicks, by proposing a learning variant of the dependent click model \cite{guo09efficient}. We strongly believe that our results can be generalized to this setting and leave this for future work.

\bibliographystyle{plain}
\bibliography{References}

\clearpage
\onecolumn
\appendix


\newpage
\begin{center}
{\Large \bf Appendix}
\end{center}
\section{Proof for Theorem~\ref{thm:bound}}
\label{sec:analysis1}

\subsection{Notations}
We start by defining some notations. For each time $t$, we define a random permutation $(\rnd{a}^{*,t}_{1}, \dots, \rnd{a}^{*,t}_{K})$ of $A^*$ based on $\rnd{A}_t$ as follows:
for any $k=1,\dots, K$, if $\rnd{a}^t_k \in A^*$, then we set $\rnd{a}^{*,t}_k = \rnd{a}^t_k$. The remaining optimal items are positioned arbitrarily.
Notice that under this random permutation, we have:
\[
\bar{w} (\rnd{a}^{*,t}_k) \geq \bar{w} (\rnd{a}^{t}_k) \quad \text{and} \quad
\rnd{U}_t (\rnd{a}^{t}_k) \geq \rnd{U}_t (\rnd{a}^{*,t}_k) \quad \forall k=1,2,\ldots, K
\]

Moreover, we use $\cH_t$ to denote the ``history" (rigorously speaking, $\sigma$-algebra) by the end of time $t$. Then both
$\rnd{A}_t=(\rnd{a}^{t}_{1}, \dots, \rnd{a}^{t}_{K})$ and the permutation $(\rnd{a}^{*,t}_{1}, \dots, \rnd{a}^{*,t}_{K})$ of $A^*$ are
$\cH_{t-1}$-adaptive. In other words, they are conditionally deterministic at the beginning of time $t$. To simplify the notation, in this paper, we use $\E{\cdot}{t}$ to denote $\E{\cdot | \cH_{t-1}}{}$ when appropriate.

When appropriate, we also use $\langle \cdot, \cdot \rangle$ to denote the inner product of two vectors. Specifically, for two vectors $u$ and $v$
with the same dimension, we use $\langle u,v \rangle$ to denote $u\transpose v$.

\subsection{Regret Decomposition}

We first prove the following technical lemma:
\begin{lemma}
For any $B=(b_1, \dots, b_K) \in \Re^K$ and $C=(c_1, \dots, c_K) \in \Re^K$, we have
\[
\textstyle \prod_{k=1}^K b_k -  \prod_{k=1}^K c_k = \sum_{k=1}^K \left[ \prod_{i=1}^{k-1} b_i  \right] \times \left[ b_k - c_k \right] \times \left[  \prod_{j=k+1}^K c_j \right] .
\]
\label{lemma:tech1}
\end{lemma}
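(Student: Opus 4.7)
The plan is to prove this identity by telescoping through a sequence of hybrid products. For each $k \in \{0, 1, \dots, K\}$, define
\[
P_k = \left[\prod_{i=1}^{k} b_i\right]\left[\prod_{j=k+1}^{K} c_j\right],
\]
with the usual convention that an empty product equals $1$. Then $P_0 = \prod_{j=1}^K c_j$ and $P_K = \prod_{i=1}^K b_i$, so the left-hand side of the identity equals $P_K - P_0$.

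Next, I would rewrite $P_K - P_0$ as the telescoping sum $\sum_{k=1}^K (P_k - P_{k-1})$. For each $k$, the consecutive terms $P_k$ and $P_{k-1}$ share the common factors $\prod_{i=1}^{k-1} b_i$ (the first $k-1$ slots) and $\prod_{j=k+1}^K c_j$ (the last $K-k$ slots); they differ only in the $k$th slot, which is $b_k$ in $P_k$ and $c_k$ in $P_{k-1}$. Factoring the shared terms out yields
\[
P_k - P_{k-1} = \left[\prod_{i=1}^{k-1} b_i\right] (b_k - c_k) \left[\prod_{j=k+1}^K c_j\right].
\]

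Summing this over $k = 1, \dots, K$ gives exactly the right-hand side of the claimed identity, completing the proof. There is essentially no obstacle here: the argument is a direct telescoping, and the only care needed is bookkeeping the empty-product conventions at the boundary cases $k=1$ (where $\prod_{i=1}^{0} b_i = 1$) and $k=K$ (where $\prod_{j=K+1}^K c_j = 1$), both of which are handled consistently.
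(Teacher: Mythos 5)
Your proof is correct and is essentially the same argument as the paper's: both rest on the telescoping identity for the hybrid products $\prod_{i=1}^{k} b_i \prod_{j=k+1}^{K} c_j$, with the paper expanding each summand of the right-hand side into a difference of consecutive hybrids and telescoping, while you run the same telescope from the left-hand side. The difference is purely one of presentation, not substance.
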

\begin{proof}
Notice that 
\begin{align}
 & \textstyle \sum_{k=1}^K \left[ \prod_{i=1}^{k-1} b_i  \right] \times \left[ b_k - c_k \right] \times \left[  \prod_{j=k+1}^K c_j \right] \nonumber \\
 = & \textstyle \sum_{k=1}^K \left \{  \left[ \prod_{i=1}^{k} b_i  \right] \times \left[  \prod_{j=k+1}^K c_j \right] -
 \left[ \prod_{i=1}^{k-1} b_i  \right] \times \left[  \prod_{j=k}^K c_j \right] \right \} \nonumber \\
 = & \textstyle \prod_{k=1}^K b_k - \prod_{k=1}^K c_k. \nonumber
\end{align}
\end{proof}
Thus we have
\begin{align}
R (\rnd{A}_t, \rnd{w}_t) = & f(A^*, \rnd{w}_t) - f (\rnd{A}_t, \rnd{w}_t) \nonumber \\
=&\textstyle  \prod_{k=1}^K \left( 1-\rnd{w}_t (\rnd{a}^t_k) \right) -  \prod_{k=1}^K \left( 1-\rnd{w}_t (\rnd{a}^{*,t}_k) \right) \nonumber \\
\stackrel{(a)}{=}& \textstyle \sum_{k=1}^K \left[ \prod_{i=1}^{k-1} \left( 1-\rnd{w}_t (\rnd{a}^t_i) \right) \right] \left[ \rnd{w}_t (\rnd{a}^{*,t}_k) - \rnd{w}_t (\rnd{a}^t_k) \right]
\left[ \prod_{j=k+1}^K  \left( 1-\rnd{w}_t (\rnd{a}^{*,t}_j) \right) \right] \nonumber \\
\stackrel{(b)}{\leq} & \textstyle \sum_{k=1}^K \left[ \prod_{i=1}^{k-1} \left( 1-\rnd{w}_t (\rnd{a}^t_i) \right) \right] \left[ \rnd{w}_t (\rnd{a}^{*,t}_k) - \rnd{w}_t (\rnd{a}^t_k) \right],
\end{align}
where equality (a) is based on Lemma~\ref{lemma:tech1} and inequality (b) is based on the fact that $\prod_{j=k+1}^K  \left( 1-\rnd{w}_t (\rnd{a}^{*,t}_j) \right) \leq 1$. Recall that $\rnd{A}^t$ and the permutation $(\rnd{a}^{*,t}_{1}, \dots, \rnd{a}^{*,t}_{K})$ of $A^*$ are deterministic conditioning on $\cH_{t-1}$, and $\rnd{a}_k^{*,t} \neq \rnd{a}_i^t$ for all $i<k$, thus we have
\begin{align}
\E{R (\rnd{A}_t, \rnd{w}_t)}{t} \leq & \, \textstyle \E { \sum_{k=1}^K \left[ \prod_{i=1}^{k-1} \left( 1-\rnd{w}_t (\rnd{a}^t_i) \right) \right] \left[ \rnd{w}_t (\rnd{a}^{*,t}_k) - \rnd{w}_t (\rnd{a}^t_k) \right]}{t} \nonumber \\
= & \, \textstyle
 \sum_{k=1}^K \E {
 \prod_{i=1}^{k-1} \left( 1-\rnd{w}_t (\rnd{a}^t_i) \right) }{t}  
 \E { \rnd{w}_t (\rnd{a}^{*,t}_k) - \rnd{w}_t (\rnd{a}^t_k) }{t} \nonumber \\
 =& \, \textstyle
  \sum_{k=1}^K \E {
 \prod_{i=1}^{k-1} \left( 1-\rnd{w}_t (\rnd{a}^t_i) \right) }{t}  
  \left[ \bar{w} (\rnd{a}^{*,t}_k) - \bar{w} (\rnd{a}^t_k) \right]. \nonumber
 \end{align}
 For any $t \leq n$ and any $e \in E$, we define event
 \[
 \cG_{t,k}=\left \{ \text{item $\rnd{a}^t_k$ is examined in episode $t$}\right \},
 \]
 notice that $\I{\cG_{t,k}}{}=\prod_{i=1}^{k-1} \left( 1-\rnd{w}_t (\rnd{a}^t_i) \right)$. Thus, we have
 \[
 \textstyle \E{\rnd{R}_t}{t} \leq \sum_{k=1}^K \E { \I{\cG_{t,k}}{}
  }{t}  
  \left[ \bar{w} (\rnd{a}^{*,t}_k) - \bar{w} (\rnd{a}^t_k) \right].
  \]
  Hence, from the tower property, we have
  \begin{align}
  \textstyle 
  R(n) \leq \E{
  \sum_{t=1}^n \sum_{k=1}^K   \I{\cG_{t,k}}{}  \left[ \bar{w} (\rnd{a}^{*,t}_k) - \bar{w} (\rnd{a}^t_k) \right]
  }{}. 
  \end{align}
  We further define event $\cE$ as
  \begin{align}
  \cE = \left \{ 
   \left | \langle x_e, \bar{\theta}_{t-1} - \theta^* \rangle  \right | 
  \leq 
 c \sqrt{x_e^T M_{t-1}^{-1} x_e} 
, \, \forall e \in E, \, \forall t \leq n \right \},
  \end{align}
  and $\bar{\cE}$ as the complement of $\cE$.
  Then we have
  \begin{align}
  R(n) \stackrel{(a)}{\leq} &  \textstyle P(\cE) \E{
  \sum_{t=1}^n \sum_{k=1}^K   \I{\cG_{t,k}}{}  \left[ \bar{w} (\rnd{a}^{*,t}_k) - \bar{w} (\rnd{a}^t_k) \right]
  \middle | \cE}{} \nonumber \\
 + &   \textstyle
  P(\bar{\cE}) \E{
  \sum_{t=1}^n \sum_{k=1}^K   \I{\cG_{t,k}}{}  \left[ \bar{w} (\rnd{a}^{*,t}_k) - \bar{w} (\rnd{a}^t_k) \right]
  \middle | \bar{\cE}}{} \nonumber \\
  \stackrel{(b)}{\leq} &  \textstyle
  \E{
  \sum_{t=1}^n \sum_{k=1}^K   \I{\cG_{t,k}}{}  \left[ \bar{w} (\rnd{a}^{*,t}_k) - \bar{w} (\rnd{a}^t_k) \right]
  \middle | \cE}{} + nK  P(\bar{\cE}), 
  \end{align}
  where inequality (a) is based on the law of total probability, and the inequality (b) is based on the naive bounds
  (1) $P(\cE) \leq 1$ and (2) $ \I{\cG_{t,k}}{}  \left[ \bar{w} (\rnd{a}^{*,t}_k) - \bar{w} (\rnd{a}^t_k) \right] \leq 1$.
  Notice that from the definition of event $\cE$, we have
  \[
  \bar{w}(e) = \langle x_e, \theta^*  \rangle  \leq  \langle x_e, \bar{\theta}_{t-1}  \rangle + c \sqrt{x_e^T M_{t-1}^{-1} x_e} 
  \quad \forall e \in E, \, \forall t \leq n
  \]
  under event $\cE$.
  Moreover, since $\bar{w}(e) \leq 1$ by definition, we have $\bar{w}(e) \leq \rnd{U}_t (e)$ for all $e \in E$ and all $t \leq n$ under event $\cE$. 
  Hence under event $\cE$, we have
  \[
   \bar{w} (\rnd{a}^t_k)  \leq \bar{w} (\rnd{a}^{*,t}_k)  \leq \rnd{U}_t (\rnd{a}^{*,t}_k) \leq \rnd{U}_t (\rnd{a}^{t}_k) \leq 
   \langle x_{\rnd{a}^{t}_k}, \bar{\theta}_{t-1}  \rangle + c \sqrt{x_{\rnd{a}^{t}_k}^T M_{t-1}^{-1} x_{\rnd{a}^{t}_k}} \quad
   \forall t \leq n.
  \]
  Thus we have
  \begin{align}
  \bar{w} (\rnd{a}^{*,t}_k)  -  \bar{w} (\rnd{a}^t_k) 
  \stackrel{(a)}{\leq} &  \langle x_{\rnd{a}^{t}_k}, \bar{\theta}_{t-1} - \theta^*  \rangle + c \sqrt{x_{\rnd{a}^{t}_k}^T M_{t-1}^{-1} x_{\rnd{a}^{t}_k}} \nonumber \\
   \stackrel{(b)}{\leq} &  2 c \sqrt{x_{\rnd{a}^{t}_k}^T M_{t-1}^{-1} x_{\rnd{a}^{t}_k}}, \nonumber
  \end{align}
  where inequality (a) follows from the fact that 
 $\bar{w} (\rnd{a}^{*,t}_k)  \leq \langle x_{\rnd{a}^{t}_k}, \bar{\theta}_{t-1}  \rangle + c \sqrt{x_{\rnd{a}^{t}_k}^T M_{t-1}^{-1} x_{\rnd{a}^{t}_k}}$ and inequality (b) follows from the fact that 
 $ \langle x_{\rnd{a}^{t}_k}, \bar{\theta}_{t-1} - \theta^*  \rangle \leq c \sqrt{x_{\rnd{a}^{t}_k}^T M_{t-1}^{-1} x_{\rnd{a}^{t}_k}}$ under event
 $\cE$. Thus, we have
 \begin{equation}
\textstyle R(n) \leq 2c \E{
 \sum_{t=1}^n \sum_{k=1}^K   \I{\cG_{t,k}}{}   \sqrt{x_{\rnd{a}^{t}_k}^T M_{t-1}^{-1} x_{\rnd{a}^{t}_k}} 
  \middle | \cE}{} + nK  P(\bar{\cE}). \nonumber
 \end{equation}
 Define $\rnd{K}_t = \min \{ \rnd{C}_t, K\}$, notice that 
 \[ \textstyle \sum_{k=1}^K   \I{\cG_{t,k}}{}   \sqrt{x_{\rnd{a}^{t}_k}^T M_{t-1}^{-1} x_{\rnd{a}^{t}_k}} =
 \sum_{k=1}^{\rnd{K}_t} \sqrt{x_{\rnd{a}^{t}_k}^T M_{t-1}^{-1} x_{\rnd{a}^{t}_k}} .
 \]
 Thus, we have
  \begin{equation}
 \textstyle R(n) \leq 2c \E{
 \sum_{t=1}^n \sum_{k=1}^{\rnd{K}_t}     \sqrt{x_{\rnd{a}^{t}_k}^T M_{t-1}^{-1} x_{\rnd{a}^{t}_k}} 
  \middle | \cE}{} + nK  P(\bar{\cE}). 
  \label{eqn:bound1}
 \end{equation}
 In the next two subsections, we will provide a \emph{worst-case} bound on $ \sum_{t=1}^n \sum_{k=1}^{\rnd{K}_t}    \sqrt{x_{\rnd{a}^{t}_k}^T M_{t-1}^{-1} x_{\rnd{a}^{t}_k}} $ and a bound on $P(\bar{\cE})$.

\subsection{Worst-Case Bound on $ \sum_{t=1}^n \sum_{k=1}^{\rnd{K}_t}  \sqrt{x_{\rnd{a}^{t}_k}^T M_{t-1}^{-1} x_{\rnd{a}^{t}_k}} $}

\begin{lemma}
$\sum_{t=1}^n \sum_{k=1}^{\rnd{K}_t}    \sqrt{x_{\rnd{a}^{t}_k}^T M_{t-1}^{-1} x_{\rnd{a}^{t}_k}} \leq 
K \sqrt{
\frac{dn \log 
 \left[
 1+  \frac{nK}{d\sigma^2}
 \right]}{\log \left( 1 + \frac{1}{\sigma^2}   \right)}
}$.
\label{lemma:tech2}
\end{lemma}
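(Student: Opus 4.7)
The plan is a three-step elliptic-potential argument whose main wrinkle is that the algorithm reuses the stale matrix $\rnd{M}_{t-1}$ for every item examined in step $t$. Write $y_{t,k} := x_{\rnd{a}^t_k}\transpose \rnd{M}_{t-1}^{-1} x_{\rnd{a}^t_k}$ for brevity. First, since the double sum has $\sum_{t=1}^n \rnd{K}_t \leq nK$ terms, Cauchy--Schwarz gives
\[
\sum_{t=1}^{n}\sum_{k=1}^{\rnd{K}_t}\sqrt{y_{t,k}} \;\leq\; \sqrt{nK \cdot S}, \qquad S := \sum_{t=1}^{n}\sum_{k=1}^{\rnd{K}_t} y_{t,k},
\]
so the lemma follows once we show $S \leq Kd\log[1+nK/(d\sigma^2)]/\log(1+1/\sigma^2)$.

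To estimate $S$ I would use a log-determinant potential. By Sylvester's identity applied to $\rnd{M}_t = \rnd{M}_{t-1} + \sigma^{-2}\sum_k x_{\rnd{a}^t_k} x_{\rnd{a}^t_k}\transpose$, the increment $\log\det \rnd{M}_t - \log\det \rnd{M}_{t-1}$ equals $\log\det(I+A_t)$ for a PSD matrix $A_t$ of size $\rnd{K}_t \times \rnd{K}_t$ whose trace is $\sigma^{-2}\sum_k y_{t,k}$. The inequality $\det(I+A) \geq 1+\mathrm{trace}(A)$ for PSD $A$ (a direct consequence of $\prod_i(1+\lambda_i) \geq 1+\sum_i\lambda_i$ applied to eigenvalues) then yields
\[
\log\det \rnd{M}_t - \log\det \rnd{M}_{t-1} \;\geq\; \log\!\Bigl(1 + \sigma^{-2}\sum_{k=1}^{\rnd{K}_t} y_{t,k}\Bigr).
\]
Because $\rnd{M}_{t-1} \succeq I_d$ and $\|x_e\|_2 \leq 1$, each $y_{t,k} \leq 1$, so $\sigma^{-2}\sum_k y_{t,k} \leq K/\sigma^2$. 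The elementary fact that $z \mapsto \log(1+z)/z$ is decreasing on $(0,\infty)$, used at the comparison point $c=K/\sigma^2$, linearizes the log and produces $\sum_k y_{t,k} \leq K[\log\det \rnd{M}_t - \log\det \rnd{M}_{t-1}]/\log(1+K/\sigma^2)$; since $K \geq 1$, the denominator weakens to $\log(1+1/\sigma^2)$, which is exactly what appears in the lemma.

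Summing over $t$ telescopes the right-hand side to $K\log\det \rnd{M}_n/\log(1+1/\sigma^2)$. AM--GM on the eigenvalues of $\rnd{M}_n$ gives $\det \rnd{M}_n \leq (\mathrm{trace}(\rnd{M}_n)/d)^d$, and $\mathrm{trace}(\rnd{M}_n) = d + \sigma^{-2}\sum_{t,k}\|x_{\rnd{a}^t_k}\|_2^2 \leq d + nK/\sigma^2$, so $\log\det \rnd{M}_n \leq d\log[1+nK/(d\sigma^2)]$. Plugging back into the Cauchy--Schwarz step yields the claimed bound.

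The hard part is exactly the ``stale matrix'' issue: the textbook elliptic-potential argument telescopes one rank-one update at a time and evaluates each term at the matrix immediately before its own update, but here $\rnd{M}_{t-1}$ is reused for all $\rnd{K}_t$ items chosen in step $t$, so item-by-item telescoping is unavailable. The determinant-trace inequality $\det(I+A) \geq 1+\mathrm{trace}(A)$ is what lets us absorb all $\rnd{K}_t$ stale quadratic forms into a single potential jump, and the resulting factor of $K$ in the per-step bound is exactly what produces the leading $K$ (rather than $\sqrt{K}$) on the right-hand side of the lemma.
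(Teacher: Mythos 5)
Your proof is correct, and while it lives in the same elliptic-potential framework as the paper's, it resolves the crucial ``stale matrix'' issue by a genuinely different mechanism. The paper never forms a block update: it notes that for \emph{each individual} $k \leq \rnd{K}_t$, $\det \rnd{M}_t \geq \det \rnd{M}_{t-1}\bigl(1 + z_{t,k}^2/\sigma^2\bigr)$ (a rank-one determinant identity plus monotonicity), multiplies these $\rnd{K}_t$ inequalities together at the cost of raising the ratio $\det \rnd{M}_t/\det \rnd{M}_{t-1}$ to the power $K$, telescopes to get $\bigl(\det \rnd{M}_n\bigr)^K \geq \prod_{t,k}\bigl(1+z_{t,k}^2/\sigma^2\bigr)$, and then linearizes item-by-item via $z_{t,k}^2 \leq \log\bigl(1+z_{t,k}^2/\sigma^2\bigr)/\log\bigl(1+1/\sigma^2\bigr)$, valid because each $z_{t,k}^2 \leq 1$. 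You instead aggregate all $\rnd{K}_t$ observations of step $t$ into a single potential increment via Sylvester's identity and $\det(I+A) \geq 1 + \tr(A)$, and pay the factor $K$ in the per-step linearization constant (comparison point $K/\sigma^2$). Both routes place the unavoidable factor-$K$ loss in different steps and arrive at the identical bound; in fact your argument, before you deliberately weaken $\log\bigl(1+K/\sigma^2\bigr)$ to $\log\bigl(1+1/\sigma^2\bigr)$, yields the slightly sharper intermediate bound $\sum_{t,k} z_{t,k}^2 \leq Kd\log\bigl[1+nK/(d\sigma^2)\bigr]/\log\bigl(1+K/\sigma^2\bigr)$, which the paper's per-item comparison cannot produce. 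The trade-off is that the paper's version needs only the rank-one determinant formula, whereas yours invokes the block identity $\det(I+UV)=\det(I+VU)$ and the trace lower bound on determinants, which is the standard tool in batched and delayed-feedback linear bandit analyses.
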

\begin{proof}
To simplify the exposition, we define $z_{t,k}=\sqrt{x_{\rnd{a}^{t}_k}^T M_{t-1}^{-1} x_{\rnd{a}^{t}_k}}$ 
for all $(t,k)$ s.t. $k \leq \rnd{K}_t$. Recall that
\[
M_t = M_{t-1} + \frac{1}{\sigma^2} \sum_{k=1}^{\rnd{K}_t} x_{\rnd{a}^{t}_k}  x_{\rnd{a}^{t}_k}^T
\]
Thus, for all $(t,k)$ s.t. $k \leq \rnd{K}_t$, we have that
\begin{align}
\det \left[M_t \right] \geq & \det \left[ M_{t-1} + \frac{1}{\sigma^2} x_{\rnd{a}^{t}_k}  x_{\rnd{a}^{t}_k}^T \right]
= \det \left[ M_{t-1}^{\frac{1}{2}} \left ( I  + \frac{1}{\sigma^2} M_{t-1}^{- \frac{1}{2}} x_{\rnd{a}^{t}_k}  x_{\rnd{a}^{t}_k}^T  M_{t-1}^{- \frac{1}{2}} \right) M_{t-1}^{\frac{1}{2}}  \right] \nonumber \\
= & \det \left[ M_{t-1} \right] \det \left[I  + \frac{1}{\sigma^2} M_{t-1}^{- \frac{1}{2}} x_{\rnd{a}^{t}_k}  x_{\rnd{a}^{t}_k}^T  M_{t-1}^{- \frac{1}{2}} \right] \nonumber \\
= &\det \left[ M_{t-1} \right] \left( 1 + \frac{1}{\sigma^2}  x_{\rnd{a}^{t}_k}^T M_{t-1}^{- 1}  x_{\rnd{a}^{t}_k} \right)
= \det \left[ M_{t-1} \right]   \left( 1 + \frac{z_{t,k}^2}{\sigma^2}   \right). \nonumber
\end{align}
Thus, we have
\[
\left( \det \left[M_t \right] \right)^{\rnd{K}_t} \geq \left( \det \left[ M_{t-1} \right] \right)^{\rnd{K}_t}  \prod_{k=1}^{\rnd{K}_t } \left( 1 + \frac{z_{t,k}^2}{\sigma^2}   \right).
\]
Since $\det \left[ M_t \right] \geq \det \left[ M_{t-1}  \right]$ and $\rnd{K}_t \leq K$, we have
\[
\left( \det \left[M_t \right] \right)^{K} \geq \left( \det \left[ M_{t-1} \right] \right)^{K}  \prod_{k=1}^{\rnd{K}_t } \left( 1 + \frac{z_{t,k}^2}{\sigma^2}   \right).
\]
So we have
\[
\left( \det \left[M_n \right] \right)^{K} \geq \left( \det \left[ M_{0} \right] \right)^{K} \prod_{t=1}^n \prod_{k=1}^{\rnd{K}_t } \left( 1 + \frac{z_{t,k}^2}{\sigma^2}   \right) = \prod_{t=1}^n \prod_{k=1}^{\rnd{K}_t } \left( 1 + \frac{z_{t,k}^2}{\sigma^2}   \right),
 \]
 since $M_0=I$. On the other hand, we have that
 \[
 \tr\left( M_n \right)=\tr \left(  I+ \frac{1}{\sigma^2} \sum_{t=1}^n \sum_{k=1}^{\rnd{K}_t} x_{\rnd{a}^{t}_k}  x_{\rnd{a}^{t}_k}^T \right)
 = d +  \frac{1}{\sigma^2} \sum_{t=1}^n \sum_{k=1}^{\rnd{K}_t} \| x_{\rnd{a}^{t}_k} \|_2^2 \leq d + \frac{nK}{\sigma^2},
 \]
 where the last inequality follows from the fact that $\| x_{\rnd{a}^{t}_k} \|_2 \leq 1$ and $\rnd{K}_t \leq K$.
 From the trace-determinant inequality, we have $\frac{1}{d}  \tr\left( M_n \right) \geq \left[ \det(M_n)\right]^{\frac{1}{d}}$,
 thus we have
 \[
 \left[
 1+  \frac{nK}{d \sigma^2}
 \right]^{dK}
 \geq 
 \left [
 \frac{1}{d}  \tr\left( M_n \right) 
 \right ]^{dK}
 \geq
 \left[ \det( M_n)\right]^K
 \geq
  \prod_{t=1}^n \prod_{k=1}^{\rnd{K}_t } \left( 1 + \frac{z_{t,k}^2}{\sigma^2}   \right).
 \]
Taking the logarithm, we have
\begin{equation}
d K \log 
 \left[
 1+  \frac{nK}{d \sigma^2}
 \right]
 \geq
 \sum_{t=1}^n \sum_{k=1}^{\rnd{K}_t} \log \left( 1 + \frac{z_{t,k}^2}{\sigma^2}   \right).
\end{equation}
Notice that $z_{t,k}^2=x_{\rnd{a}^{t}_k}^T M_{t-1}^{-1} x_{\rnd{a}^{t}_k} \leq x_{\rnd{a}^{t}_k}^T M_{0}^{-1} x_{\rnd{a}^{t}_k}
= \| x_{\rnd{a}^{t}_k} \|^2_2  \leq 1$, thus we have 
$z_{t,k}^2 \leq \frac{\log \left( 1 + \frac{z_{t,k}^2}{\sigma^2}   \right)}{\log \left( 1 + \frac{1}{\sigma^2}   \right)}$. \footnote{Notice that for any
$y \in [0,1]$, we have $y \leq \frac{\log \left( 1 + \frac{y}{\sigma^2}   \right)}{\log \left( 1 + \frac{1}{\sigma^2}   \right)} = h(y)$. 
To see it, notice that $h(y)$ is a strictly concave function, and $h(0)=0$ and $h(1)=1$.}
Hence we have
\[
\sum_{t=1}^{n} \sum_{k=1}^{\rnd{K}_t} z_{t,k}^2 
\leq \frac{1}{\log \left( 1 + \frac{1}{\sigma^2}   \right)}  \sum_{t=1}^n \sum_{k=1}^{\rnd{K}_t} \log \left( 1 + \frac{z_{t,k}^2}{\sigma^2}   \right)
\leq \frac{d K \log 
 \left[
 1+  \frac{nK}{d \sigma^2}
 \right]}{\log \left( 1 + \frac{1}{\sigma^2}   \right)}.
\]
Finally, from Cauchy-Schwarz inequality, we have that
\[
\sum_{t=1}^{n} \sum_{k=1}^{\rnd{K}_t} z_{t,k} \leq \sqrt{nK} \sqrt{\sum_{t=1}^{n} \sum_{k=1}^{\rnd{K}_t} z_{t,k}^2 }
\leq K \sqrt{
\frac{dn \log 
 \left[
 1+  \frac{nK}{d \sigma^2}
 \right]}{\log \left( 1 + \frac{1}{\sigma^2}   \right)}
}.
\]
\end{proof}

\subsection{Bound on $P(\bar{\cE})$}

\begin{lemma}
For any $\sigma>0$, any $\delta \in (0,1)$, and any
\[
c \geq  \frac{1}{\sigma}  \sqrt{d \log \left( 1 + \frac{nK}{d \sigma^2} \right)+ 2 \log \left (\frac{1}{\delta} \right) } + \|\theta^*
\|_2 ,
\]
we have $P(\bar{\cE}) \leq \delta$.

\end{lemma}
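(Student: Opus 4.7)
The plan is to bound the Mahalanobis norm $\|\bar{\theta}_{t-1}-\theta^*\|_{\rnd{M}_{t-1}}$ uniformly in $t$ using the self-normalized martingale inequality of Abbasi-Yadkori et al., and then turn the pointwise bound in the definition of $\cE$ into a Cauchy--Schwarz statement.

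First I would do the standard decomposition of the ridge estimator. Writing $\rnd{Y}_{t-1} = \rnd{X}_{t-1}\theta^* + \bm{\eta}_{t-1}$, where $\bm{\eta}_{t-1}$ is the stacked vector of noises $\rnd{w}_s(\rnd{a}^s_k)-\bar{w}(\rnd{a}^s_k)$ for every observed $(s,k)$, a short calculation using $\rnd{M}_{t-1} = \sigma^{-2}\rnd{X}_{t-1}\transpose \rnd{X}_{t-1}+I_d$ gives
\[
\bar{\theta}_{t-1}-\theta^* \;=\; -\rnd{M}_{t-1}^{-1}\theta^* \;+\; \sigma^{-2}\rnd{M}_{t-1}^{-1}\rnd{X}_{t-1}\transpose \bm{\eta}_{t-1}.
\]
Taking the $\rnd{M}_{t-1}$-norm and using $\rnd{M}_{t-1}\succeq I_d$ yields
\[
\|\bar{\theta}_{t-1}-\theta^*\|_{\rnd{M}_{t-1}} \;\leq\; \|\theta^*\|_2 \;+\; \sigma^{-2}\,\|\rnd{X}_{t-1}\transpose \bm{\eta}_{t-1}\|_{\rnd{M}_{t-1}^{-1}}.
\]

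Next I would set up the right filtration so as to apply the self-normalized bound to the second term. I would index observations by the pair $(s,k)$ with $k\le \min\{\rnd{C}_s,K\}$, and let $\cF_{(s,k)}$ contain $\cH_{s-1}$ together with $\rnd{w}_s(\rnd{a}^s_1),\dots,\rnd{w}_s(\rnd{a}^s_k)$. Then the regressor $x_{\rnd{a}^s_k}$ is $\cF_{(s,k-1)}$-measurable (the cascade determines whether position $k$ is examined from earlier weights in the same round), and the noise $\rnd{w}_s(\rnd{a}^s_k)-\bar{w}(\rnd{a}^s_k)$ is conditionally zero-mean and bounded in $[-1,1]$, hence $1$-sub-Gaussian (one can use $R=1/2$ by Hoeffding, but $R=1$ suffices). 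Applying Theorem 1 of Abbasi-Yadkori et al. with regularizer $V=\sigma^2 I_d$ (so that $V + \sum x_{\rnd{a}^s_k} x_{\rnd{a}^s_k}\transpose = \sigma^2\rnd{M}_{t-1}$) gives, with probability at least $1-\delta$, simultaneously for all $t$,
\[
\sigma^{-2}\|\rnd{X}_{t-1}\transpose \bm{\eta}_{t-1}\|_{\rnd{M}_{t-1}^{-1}}^{2}
\;\leq\; \frac{2}{\sigma^{2}}\log\!\left(\frac{\det(\rnd{M}_{t-1})^{1/2}}{\delta}\right).
\]

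To finish, I would bound $\det(\rnd{M}_{t-1})$ by the trace-determinant inequality exactly as in the previous lemma: since $\tr(\rnd{M}_{t-1})\leq d + (t-1)K/\sigma^{2} \leq d+nK/\sigma^{2}$ (using $\|x_e\|_2\le 1$ and $\rnd{K}_s\le K$), one gets $\det(\rnd{M}_{t-1})\leq \bigl(1+nK/(d\sigma^{2})\bigr)^{d}$. Plugging this in yields
\[
\|\bar{\theta}_{t-1}-\theta^*\|_{\rnd{M}_{t-1}}
\;\leq\; \|\theta^*\|_2 + \frac{1}{\sigma}\sqrt{d\log\!\left(1+\tfrac{nK}{d\sigma^{2}}\right) + 2\log(1/\delta)}
\;\leq\; c
\]
on an event of probability at least $1-\delta$. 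Finally, the generalized Cauchy--Schwarz inequality $|\langle x_e,\bar{\theta}_{t-1}-\theta^*\rangle|\leq \|x_e\|_{\rnd{M}_{t-1}^{-1}}\|\bar{\theta}_{t-1}-\theta^*\|_{\rnd{M}_{t-1}}$ gives the event $\cE$ for every $e\in E$ and every $t\leq n$, so $P(\bar{\cE})\leq \delta$.

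The only delicate point is the filtration setup in the second step: because the cascade feedback means both whether an item is observed and which item is observed at position $k$ depend on the within-round realizations at positions $<k$, one has to index by $(s,k)$ rather than by round $s$ so that the regressors remain predictable and the noises form a martingale difference sequence; once that is set up, the self-normalized bound applies verbatim.
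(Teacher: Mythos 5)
Your proposal is correct and takes essentially the same route as the paper's proof: the same ridge decomposition of $\bar{\theta}_{t-1}-\theta^*$, the same $(t,k)$-indexed martingale-difference filtration handling the cascade feedback, the self-normalized bound of Abbasi-Yadkori et al.\ with regularizer $\sigma^2 I_d$, the trace--determinant inequality, and generalized Cauchy--Schwarz (the paper merely applies Cauchy--Schwarz before, rather than after, bounding the noise term --- an immaterial reordering).

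One scaling slip should be corrected. Your displayed self-normalized bound,
\[
\sigma^{-2}\|\rnd{X}_{t-1}\transpose \bm{\eta}_{t-1}\|_{\rnd{M}_{t-1}^{-1}}^{2}
\leq \frac{2}{\sigma^{2}}\log\left(\frac{\det(\rnd{M}_{t-1})^{1/2}}{\delta}\right),
\]
is not what the cited theorem gives: with $\bar{V}_{t-1}=\sigma^2\rnd{M}_{t-1}$ and $V=\sigma^2 I_d$ the left-hand side equals $\|\rnd{X}_{t-1}\transpose\bm{\eta}_{t-1}\|^2_{\bar{V}_{t-1}^{-1}}$, and the right-hand side should be $2\log\bigl(\det(\rnd{M}_{t-1})^{1/2}/\delta\bigr)$; the $\sigma$-factors cancel inside the determinant ratio $\det(\bar{V}_{t-1})^{1/2}\det(V)^{-1/2}$ and do not resurface as a $\sigma^{-2}$ prefactor. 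This matters because your display, taken literally, yields $\sigma^{-2}\|\rnd{X}_{t-1}\transpose\bm{\eta}_{t-1}\|_{\rnd{M}_{t-1}^{-1}}\leq \sigma^{-2}\sqrt{d\log\left(1+\tfrac{nK}{d\sigma^2}\right)+2\log(1/\delta)}$, a $1/\sigma^{2}$ prefactor that for $\sigma<1$ exceeds the $1/\sigma$ allowance in the lemma's condition on $c$ (and for $\sigma>1$ claims more than the theorem provides). With the corrected right-hand side, your very next line --- the $\tfrac{1}{\sigma}\sqrt{d\log\left(1+\tfrac{nK}{d\sigma^2}\right)+2\log(1/\delta)}$ bound on $\|\bar{\theta}_{t-1}-\theta^*\|_{\rnd{M}_{t-1}}$ less $\|\theta^*\|_2$ --- follows exactly, and the remainder of your argument, including the uniformity over $t\le n$ and all $e\in E$ via Cauchy--Schwarz, matches the paper and completes the proof.
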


\begin{proof}
We start by defining some useful notations. For any $t=1,2,\dots$, any $k=1,2, \dots, \rnd{K}_t$, we define
\[
\rnd{\eta}_{t,k}=\rnd{w}_t (\rnd{a}^t_k) - \bar{w} (\rnd{a}^t_k).
\]
One key observation is that $\rnd{\eta}_{t,k}$'s form a Martingale difference sequence (MDS).\footnote{Notice that the notion of
``time" is indexed by the pair $(t,k)$, and follows the lexicographical order.} Moreover, since $\rnd{\eta}_{t,k}$'s are bounded in $[-1,1]$
and hence they are conditionally sub-Gaussian with constant $R=1$. We further define that
\begin{align}
\rnd{V}_t = &  \sigma^2 M_t = \sigma^2 I + \sum_{\tau=1}^t \sum_{k=1}^{\rnd{K}_{\tau}} x_{\rnd{a}^{\tau}_k} x_{\rnd{a}^{\tau}_k}^T 
\nonumber \\
\rnd{S}_t = &  \sum_{\tau=1}^t \sum_{k=1}^{\rnd{K}_{\tau}} x_{\rnd{a}^{\tau}_k} \rnd{\eta}_{t,k} 
= B_t - \sum_{\tau=1}^t \sum_{k=1}^{\rnd{K}_{\tau}} x_{\rnd{a}^{\tau}_k} \bar{w} (\rnd{a}^t_k)
=  B_t  - \left[  \sum_{\tau=1}^t \sum_{k=1}^{\rnd{K}_{\tau}} x_{\rnd{a}^{\tau}_k} x_{\rnd{a}^{\tau}_k}^T \right] \theta^*
\nonumber
\end{align}
As we will see later, we define $\rnd{V}_t $ and $\rnd{S}_t$ to use the ``self normalized bound" developed in 
\cite{abbasi-yadkori11improved} (see Algorithm 1 of \cite{abbasi-yadkori11improved}). Notice that
\[
M_t \bar{\theta}_t = \frac{1}{\sigma^2} B_t =  \frac{1}{\sigma^2} \rnd{S}_t + \frac{1}{\sigma^2}  \left[  \sum_{\tau=1}^t \sum_{k=1}^{\rnd{K}_{\tau}} x_{\rnd{a}^{\tau}_k} x_{\rnd{a}^{\tau}_k}^T \right] \theta^*
 = \frac{1}{\sigma^2} \rnd{S}_t +  \left[ M_t -I \right] \theta^*,
\]
where the last equality is based on the definition of $M_t$. Hence we have
\[
\bar{\theta}_t  - \theta^* =  M_t^{-1} \left[ \frac{1}{\sigma^2} \rnd{S}_t -\theta^* \right].
\]
Thus, for any $e \in E$, we have
\begin{align}
\left  |  \langle 
x_e, \bar{\theta}_t  - \theta^*
\rangle  \right  |
= & \left |
x_e^T
M_t^{-1} \left[ \frac{1}{\sigma^2} \rnd{S}_t -\theta^* \right]
\right | 
\leq \| x_e \|_{M_t^{-1}} 
\|
\frac{1}{\sigma^2} \rnd{S}_t -\theta^*
\|_{M_t^{-1}} \nonumber \\
 \leq &
\| x_e \|_{M_t^{-1}}  \left[
\|
\frac{1}{\sigma^2} \rnd{S}_t \|_{M_t^{-1}} + \|\theta^*
\|_{M_t^{-1}}
\right]
 , \nonumber
\end{align}
where the first inequality follows from the Cauchy-Schwarz inequality and the second inequality follows from the triangle
inequality. Notice that $\|\theta^*
\|_{M_t^{-1}} \leq \|\theta^*
\|_{M_0^{-1}} = \|\theta^*
\|_2$, and 
$\|
\frac{1}{\sigma^2} \rnd{S}_t \|_{M_t^{-1}}  = \frac{1}{\sigma} \| \rnd{S}_t \|_{\rnd{V}_t^{-1}} $ (since $M_t^{-1}= \sigma^2 \rnd{V}_t^{-1}$), so we have
\begin{equation}
\left  |  \langle 
x_e, \bar{\theta}_t  - \theta^*
\rangle  \right  | \leq
\| x_e \|_{M_t^{-1}}  \left[
\frac{1}{\sigma} \| \rnd{S}_t \|_{\rnd{V}_t^{-1}} + \|\theta^*
\|_2
\right ].
\label{eq:bound2}
\end{equation}
Notice that the above inequality always holds. We now provide a high-probability bound on $\| \rnd{S}_t \|_{\rnd{V}_t^{-1}} $
based on ``self normalized bound" proposed in \cite{abbasi-yadkori11improved}.
From Theorem 1 of \cite{abbasi-yadkori11improved}, we know that for any $\delta \in (0,1)$, with probability at least $1-\delta$, 
we have
\[
\| \rnd{S}_t \|_{\rnd{V}_t^{-1}}  \leq
\sqrt{
2 \log \left(
\frac{\det(\rnd{V}_t)^{1/2} \det(\rnd{V}_0)^{-1/2}}
{\delta}
\right)
} \quad
\forall t=0,1,\dots
\]
Notice that $ \det(\rnd{V}_0)= \det(\sigma^2 I)=\sigma^{2d}$. Moreover, from the trace-determinant inequality, we have
\[
\left[ \det (\rnd{V}_t) \right]^{1/d} \leq \frac{\tr \left ( \rnd{V}_t \right )}{d} = \sigma^2 + \frac{1}{d} \sum_{\tau=1}^t \sum_{k=1}^{\rnd{K}_{\tau}}
\| x_{\rnd{a}^t_k}\|_2^2 \leq \sigma^2 + \frac{tK}{d} \leq  \sigma^2 + \frac{nK}{d},
\]
where the second inequality follows from the assumption that $\| x_{\rnd{a}^t_k}\|_2 \leq 1$ and $\rnd{K}_{\tau} \leq K$, and the last inequality follows from $t \leq n$. Thus, with probability at least $1-\delta$, we have
\[
\| \rnd{S}_t \|_{\rnd{V}_t^{-1}}  \leq \sqrt{d \log \left( 1 + \frac{nK}{d \sigma^2} \right)+ 2 \log \left (\frac{1}{\delta} \right) } \quad
\forall t=0,1,\dots, n-1.
\]
That is, with probability at least $1-\delta$, we have
\begin{equation}
\left  |  \langle 
x_e, \bar{\theta}_t  - \theta^*
\rangle  \right  | \leq
\| x_e \|_{M_t^{-1}}  \left[
\frac{1}{\sigma}  \sqrt{d \log \left( 1 + \frac{nK}{d \sigma^2} \right)+ 2 \log \left (\frac{1}{\delta} \right) } + \|\theta^*
\|_2
\right ] \nonumber
\end{equation}
for all $t=0,1,\dots, n-1$ and $\forall e \in E$. 
Recall that by definition of event $\cE$, the above inequality implies that, if
\[
c \geq \frac{1}{\sigma}  \sqrt{d \log \left( 1 + \frac{nK}{d \sigma^2} \right)+ 2 \log \left (\frac{1}{\delta} \right) } + \|\theta^*
\|_2,
\]
then $P (\cE) \geq 1-\delta$. That is, $P (\bar{\cE}) \leq \delta$.
\end{proof}

\subsection{Conclude the Proof}
Putting it together, for any $\sigma>0$, any $\delta \in (0,1)$, and any
\[
c \geq  \frac{1}{\sigma}  \sqrt{d \log \left( 1 + \frac{nK}{d \sigma^2} \right)+ 2 \log \left (\frac{1}{\delta} \right) } + \|\theta^*
\|_2 ,
\]
we have that
\begin{align}
R(n) \leq  & 2c \E{
 \sum_{t=1}^n \sum_{k=1}^{\rnd{K}_t}     \sqrt{x_{\rnd{a}^{t}_k}^T M_{t-1}^{-1} x_{\rnd{a}^{t}_k}} 
  \middle | \cE}{} + nK  P(\bar{\cE}) \nonumber \\
 \leq & 2c K \sqrt{
\frac{dn \log 
 \left[
 1+  \frac{nK}{d \sigma^2}
 \right]}{\log \left( 1 + \frac{1}{\sigma^2}   \right)} }+ nK \delta.
\end{align}
Choose $\delta=\frac{1}{nK}$, we have the following result:
for any $\sigma>0$ and any 
\[
c \geq  \frac{1}{\sigma}  \sqrt{d \log \left( 1 + \frac{nK}{d \sigma^2} \right)+ 2 \log \left ( nK \right) } + \|\theta^*
\|_2 ,
\]
we have
\[
R(n) \leq 2c K \sqrt{
\frac{dn \log 
 \left[
 1+  \frac{nK}{d \sigma^2}
 \right]}{\log \left( 1 + \frac{1}{\sigma^2}   \right)} }+ 1.
\]

\end{document}